%% file: main.tex
\documentclass{article}

\PassOptionsToPackage{numbers,sort&compress}{natbib}
\usepackage[preprint]{neurips_2026}

\def \name {FATE}

\usepackage[utf8]{inputenc} % allow utf-8 input
\usepackage[T1]{fontenc}    % use 8-bit T1 fonts
\usepackage{hyperref}       % hyperlinks
\usepackage{url}            % simple URL typesetting
\usepackage{booktabs}       % professional-quality tables
\usepackage{amsfonts}       % blackboard math symbols
\usepackage{nicefrac}       % compact symbols for 1/2, etc.
\usepackage{microtype}      % microtypography
\usepackage{xcolor}         % colors
\usepackage{wrapfig}        % wraps around the texts

\usepackage{natbib}
\usepackage{amsmath}
\usepackage{amssymb}
\usepackage{verbatim}
\usepackage{booktabs}
\usepackage{multirow}
\usepackage{graphicx}
\usepackage{soul}
\usepackage{algorithm}
\usepackage{algpseudocode}
\usepackage{titletoc}
\usepackage{subcaption}
\usepackage{pifont}
\newcommand{\cmark}{\ding{51}} % checkmark
\newcommand{\xmark}{\ding{55}} % cross mark

\algrenewcommand\algorithmicrequire{\textbf{Input:}}
\algrenewcommand\algorithmicensure{\textbf{Output:}}

\usepackage{amsthm}
\theoremstyle{plain}

\newtheorem{theorem}{Theorem}

\title{FATE: Pillar Encoding and Frequency-Aware Training for Event-Based Object Detection}

\author{
  Md Tawheedul Islam Bhuian \quad Kyoung-Don Kang \\
  \smallskip
  \small School of Computing \\
  \small State University of New York at Binghamton \\
  \small \texttt{\{mislambhuian, kang\}@binghamton.edu}
}

\begin{document}

\maketitle

% ---- Main Paper ----

\begin{abstract}
  \input{sections/0.abstract}
\end{abstract}

\section{Introduction}
\label{sec:intro}
\input{sections/1.intro3} 

\section{Related Work}
\label{sec:related}
\input{sections/2.related3}

\section{Method}
\label{sec:method}
\input{sections/3.method2}

\section{Evaluation}
\label{sec:eval}
\input{sections/4.eval5}

%\subsection{Results}
%\input{sections/5.results}

\subsection{Ablation Study}
\label{sec:ablation}
\input{sections/6.ablation2}

\section{Conclusion}
\label{sec:conc}
\input{sections/7.conclusion}

% ---- Bibliography ----
\newpage
\bibliographystyle{plainnat}
\bibliography{main}

% ---- Appendix or Supplementary Materials ----

\newpage
\appendix
\setcounter{equation}{0}
\renewcommand{\theequation}{A.\arabic{equation}}
\input{sections/8.appendix}

% ---- Checklist ----

\newpage
%\section{Checklist}
% \input{sections/9.checklist}

\end{document}

%% file: sections/0.abstract.tex
Event cameras are bio-inspired sensors that asynchronously capture logarithmic intensity changes, offering inherent advantages in high-speed and high-dynamic-range scenarios. However, the sparse and asynchronous nature of event streams poses a fundamental challenge for modern deep learning architectures. To enable compatibility with standard models, most existing approaches partition the accumulation window into fixed temporal sub-bins. While effective for spatial processing, this internal discretization discards fine-grained temporal structure and constrains inference to the low temporal frequencies imposed by training supervision. To address this limitation, we propose FATE, a unified framework built upon a novel Pillar Encoding (PE). While operating over discrete macro-accumulation windows dictated by the target frequency, PE avoids internal temporal sub-binning. It organizes events into spatial pillars and approximates their intra-window evolution via projection onto a continuous-time orthogonal polynomial basis. This formulation yields an $L^2$-optimal representation that retains rich temporal dynamics in a dense pseudo-image, mitigating information loss under sparse event conditions.
To fully leverage this representation, we introduce Frequency-Aware Training (FAT), a soft mean-teacher curriculum that generates temporally dense pseudo-labels, effectively bridging the mismatch between low-frequency supervision and high-frequency inference. Extensive experiments demonstrate that FATE generalizes across architectural paradigms and consistently outperforms strong baselines. It enables robust object detection at high temporal resolutions up to 200 Hz, while incurring minimal overhead in parameter count and inference latency.

%Event cameras are bio-inspired sensors that asynchronously capture logarithmic intensity changes, offering inherent advantages in high-speed and high-dynamic-range scenarios. However, the sparse and asynchronous nature of event streams poses a fundamental challenge for modern deep learning architectures. To enable compatibility with standard models, most existing approaches aggregate events into fixed temporal bins or surfaces. While effective for spatial processing, such discretization discards fine-grained temporal structure and constrains inference to the low temporal frequencies implicitly imposed by training supervision.To address this limitation, we propose FATE, a unified framework built upon a novel Pillar Encoding (PE). Instead of discretizing time into coarse bins, PE organizes events into spatial pillars and approximates their temporal evolution via projections onto a continuous-time orthogonal polynomial basis within each local window. This formulation retains rich temporal dynamics in a dense pseudo-image representation, mitigating loss under sparse event conditions. To better utilize this representation, we propose Frequency-Aware Training (FAT), a soft mean-teacher curriculum that generates temporally dense pseudo-labels, bridging the mismatch between low-frequency supervision and high-frequency inference. Extensive experiments demonstrate that FATE consistently outperforms strong baselines, enabling robust object detection at temporal resolutions up to 200 Hz while incurring minimal overhead in parameter count and inference latency.

%% file: sections/1.intro3.tex
Event cameras introduce a fundamentally different paradigm for visual perception. Unlike conventional frame-based cameras that record absolute intensity at fixed intervals, event cameras operate asynchronously, emitting events only in response to local logarithmic brightness changes. This sensing mechanism provides fine-grained temporal resolution, high dynamic range, and minimal motion blur~\cite{gallego2020eventSurvey}. These properties make event cameras particularly well-suited for applications such as autonomous driving, robotics, and high-speed tracking, where conventional sensors often degrade under rapid motion or challenging lighting conditions.

Despite these advantages, effectively leveraging event data for downstream tasks such as object detection remains challenging. Modern architectures, including Convolutional Neural Networks (CNNs) and Vision Transformers (ViTs), assume dense, grid-structured inputs, rendering raw asynchronous event streams incompatible. Consequently, most approaches partition the macro-accumulation window into fixed temporal sub-bins or voxel grids to form dense representations~\cite{gehrig2019end, rebecq2019events}. While this enables the use of standard backbones, it introduces a key limitation: internal temporal discretization. Aggregating events via rigid sub-binning discards fine-grained temporal structure and implicitly constrains inference to low temporal frequencies present during training (e.g., 20 Hz), limiting performance at higher temporal resolutions.

To address this limitation, we propose \textbf{FATE}, a unified framework for high-frequency event-based object detection. At its core is \textbf{Pillar Encoding (PE)}. While PE operates over discrete macro-accumulation windows dictated by the target frequency, it avoids internal temporal sub-binning. By discretizing the spatial domain into pillars, PE models the within-pillar event evolution as a continuous-time signal. Specifically, asynchronous event features within each pillar are approximated via projection onto a truncated set of orthogonal Legendre polynomials, yielding a compact set of coefficients that encode temporal dynamics and mitigate information loss under sparse event conditions. This construction corresponds to an orthogonal projection onto a finite-dimensional subspace, yielding the minimizer of the $L^2$ reconstruction error under a fixed coefficient budget. As a result, PE preserves rich temporal structure while producing a dense pseudo-image compatible with standard architectures.

%unique 

While PE enables continuous-time modeling, learning such representations introduces a second challenge: the scarcity of high-frequency annotations. Event-based datasets typically provide labels at low frame rates, creating a mismatch between training supervision and high-frequency inference. To bridge this gap, we propose \textbf{Frequency-Aware Training (FAT)}. FAT first generates temporally dense pseudo-labels via tracking-by-detection, and then applies a soft mean-teacher curriculum that progressively exposes the student model to higher-frequency inputs while enforcing consistency with low-frequency supervision.

Together, PE and FAT form a complementary framework that addresses both representation and supervision mismatches. We evaluate FATE on standard event-based object detection benchmarks and observe consistent improvements over strong baselines, with more pronounced gains in high-frequency regimes.

\paragraph{Contributions.}
\begin{itemize}
\item \textbf{Pillar Encoding:} We introduce an event representation that avoids internal temporal sub-binning by modeling intra-window dynamics as continuous-time functions via orthogonal polynomial projections. Our formulation yields an $L^2$-optimal approximation under truncation, efficiently preserving fine-grained temporal structure within the accumulation window.

\item \textbf{Frequency-Aware Training:} We propose a training strategy that leverages temporally dense pseudo-labels and a mean-teacher curriculum to mitigate train--test frequency mismatch, enabling effective high-frequency inference without additional manual annotation.

\item \textbf{Robust Empirical Performance:} FATE consistently outperforms strong baselines on major benchmarks, enabling robust object detection at temporal resolutions up to 200~Hz with minimal computational overhead, utilizing less than 0.15M additional parameters and a $\sim$1\% end-to-end inference latency increase.
\end{itemize}

%% file: sections/2.related3.tex
%\subsection{Event Representations}

\textbf{Event Representations.}
Event cameras output an asynchronous stream of events rather than synchronous intensity frames, yielding a signal that is spatially sparse and temporally irregular. Early approaches transformed these events into handcrafted, frame-like tensors for compatibility with standard vision architectures. Event histograms~\cite{sironiHATSHistogramsAveraged2018} and time surfaces~\cite{lagorceHOTSHierarchyEventBased2017, benosmanEventBasedVisualFlow2013} aggregate events per pixel, discarding intermediate events prior to the most recent one. Stacked histograms~\cite{perotLearningDetectObjects} and voxel grids~\cite{zhuUnsupervisedEventBasedLearning2019} partition timestamps into discrete temporal bins, while TORE volumes~\cite{baldwinTimeOrderedRecentEvent2023} retain the $k$ most recent timestamps. Fundamentally, these representations rely on windowing or binning hyperparameters that trade temporal precision for stable tensor dimensions.

Alternatively, point-based methods preserve temporal fidelity by processing events as irregular point sets. EventNet~\cite{sekikawaEventNetAsynchronousRecursive2019} recursively updates global features, while graph-based methods~\cite{biGraphBasedSpatioTemporalFeature2020, dengVoxelGraphCNN2022} apply message passing over spatiotemporal graphs. The asynchronous-to-synchronous (A2S) paradigm seeks a middle ground, encoding events asynchronously into a state that can be sampled by a synchronous backbone~\cite{kamalAssociativeMemoryAugmented2023, turrero2024alerttransformer}. However, recent findings~\cite{haoMaximizingAsynchronicityEventbased2025} suggest that current A2S encoders still produce weaker representations relative to dense methods.

Learned grid representations, such as Matrix-LSTM~\cite{canniciDifferentiableRecurrentSurface2020} and differentiable event-to-grid frameworks~\cite{gehrig2019end}, adapt the encoding to downstream tasks. However, these methods still operate on dense pixel grids and rely on pooling or recurrence, biasing representations toward high-density regions and limiting the preservation of continuous temporal dynamics.

Our work belongs to this family of learned representations but departs from dense temporal discretization. 
We draw inspiration from PointPillars~\cite{langPointPillarsFastEncoders2019, liTinyPillarNet2024}, which collapses irregular point clouds into spatial columns. While EventPillars~\cite{fanEventPillarsPillarbasedEfficient} applies this idea to events using handcrafted statistics and primarily focuses on classification settings, we instead model temporal evolution for object detection in a learned manner. 
Building on orthogonal polynomial projections~\cite{smith2023simplified}, we approximate event dynamics within each pillar as a continuous trajectory, producing a compact and expressive representation robust to sparse event distributions.

%\subsection{Event-Based Architectures}
\textbf{Event-Based Architectures.}
Driven by these representations, various architectures have been proposed for event-based object detection. Early CNN-based detectors~\cite{jiangMixedFrameEventDriven2019, perotLearningDetectObjects} applied standard networks to frame-like inputs, discarding the asynchronous nature of events. Subsequent works introduced explicit temporal modeling: RED~\cite{perotLearningDetectObjects} employs convolutional LSTMs, ASTMNet~\cite{liAsynchronousSpatioTemporalMemory2022} leverages asynchronous spatio-temporal memory, and RVT~\cite{gehrigRecurrentVisionTransformers2023} combines multi-scale transformers with recurrent processing for efficient detection at a fixed frequency (20 Hz). More recently, PLEIADES~\cite{pei2026pleiades} proposes continuous-time temporal kernels parameterized via orthogonal polynomial bases.

Recent advances improve efficiency and temporal reasoning. These include group-wise event tokenization~\cite{pengGETGroupEvent2023}, scene-adaptive sparse attention~\cite{pengSceneAdaptiveSparse2024}, and state-space models~\cite{zubicStateSpaceModels2024, yangSmamba2025} for scalable long-range temporal modeling. Recent works such as EvRT-DETR~\cite{torbunovEvRTDETRLatentSpace} adapt pretrained image-based models to event data, while others explore spiking or hybrid ANN–SNN architectures for energy-efficient processing~\cite{suDeepDirectlyTrainedSpiking2023, aydinHybridANNSNNArchitecture2024, liHDI2024}. Additionally, multimodal transformer frameworks such as SODFormer~\cite{liSODFormerStreamingObject2023} have emerged to fuse asynchronous event streams with complementary frame-based information.

Despite these advances, most architectures are trained and evaluated at a fixed, relatively low temporal frequency (e.g., 20\,Hz), leading to degraded performance at higher frequencies with short accumulation windows~\cite{gehrigLowlatencyAutomotiveVision2024}. To address related issues, FlexEvent~\cite{luFlexEventFlexibleEventFrame2025} enables frequency adaptation via event-frame fusion but relies on synchronized RGB inputs. LEOD~\cite{wuLEODLabelEfficientObject2024} improves supervision through self-training but does not explicitly address distribution shifts across multiple frequencies.

More recently, backbones focus on sparsity and high-frequency robustness. SMamba~\cite{yangSmamba2025} leverages state-space modeling, SSLA-Det~\cite{hao2026lowlatency} introduces asynchronous linear attention for sparse processing, EMF~\cite{khan2025emf} models per-pixel temporal dynamics via event progression, SEED~\cite{wang2025sparse} targets neuromorphic efficiency, and MvHeat-DET~\cite{wang2025object} employs heat-conduction mechanism to balance accuracy and efficiency. However, these methods primarily focus on backbone design and do not explicitly address pillar encoding or multi-frequency supervision. In contrast, FATE explicitly targets this gap by providing pillar encoding and supervision across multiple temporal resolutions without requiring paired RGB data or manual annotations of high-frequency events.

%It dynamically adapts learning to varying event rates without requiring paired RGB data or high-frequency manual annotations.

%% file: sections/3.method2.tex
\subsection{Pillar Encoding}
\label{sec:pillar_encoding}
%We first transform a raw event stream into an image-like representation. Prior event representations, such as stacked 2D histograms~\cite{gallego2020eventSurvey}, discretize events into rigid temporal bins. In short windows, event streams become sparse, and binning further attenuates the already limited signals. To address the issue, the proposed PE discretizes the spatial domain into pillars and models the events within each pillar using a polynomial approximation of continuous-time dynamics. The steps of PE are illustrated in Figure~\ref{fig:pe} and discussed in sequence, as follows.

We transform raw event streams into an image-like representation. Prior methods (e.g., stacked 2D histograms~\cite{gallego2020eventSurvey}) rely on rigid temporal binning, which exacerbates sparsity in short windows. To address this, our PE discretizes space into pillars and models events within each pillar via a continuous-time polynomial approximation. The PE pipeline is illustrated in Figure~\ref{fig:pe} and described next.

%\paragraph{Event Pillarization.} 

\textbf{Event Pillarization.}
Given a target inference frequency $f$, we define a macro-accumulation window $[t_1, t_2]$ of duration $\Delta t = 1/f$. The event stream within this window is denoted as a set of spatiotemporal points $\mathcal{E}$. To process these points, we first discretize the spatial domain into an evenly spaced grid in the $x-y$ plane, partitioning $\mathcal{E}$ into a set of discrete spatial pillars $\mathcal{P}$ indexed by $j \in \{1,\ldots,P\}$. Crucially, we retain the continuous temporal precision of each event within its respective pillar. To ensure numerical stability and align with the canonical domain of the Legendre polynomials, the absolute timestamp $t_i \in [t_1, t_2]$ of each event $e_i \in \mathcal{E}$ is normalized and centered as $\tau_i = 2 \left( \frac{t_i - t_1}{\Delta t} \right) - 1$, ensuring $\tau_i \in [-1,1]$ and the center = 0.

\begin{figure}[htb]
    \vspace{-0.35cm}
    \centering
    \includegraphics[width=0.8\linewidth]{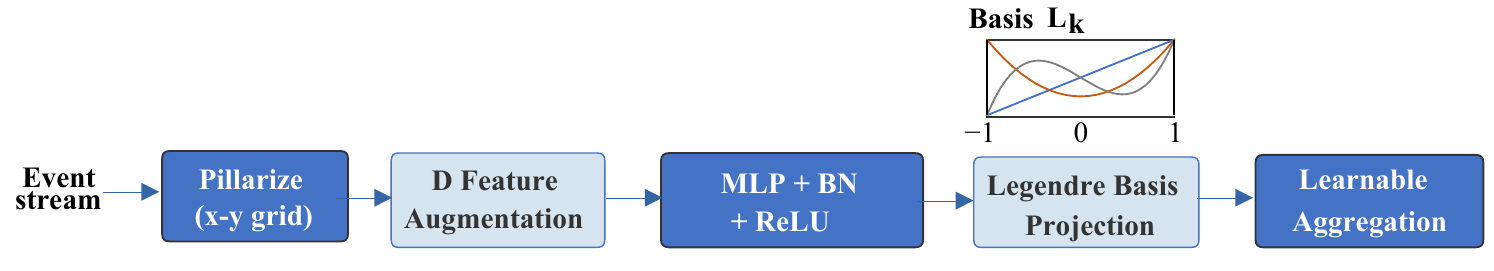}
    \caption{Pillar Encoding of FATE}
    \label{fig:pe}
    \vspace{-0.35cm}
\end{figure}

Each event is augmented with local spatial and temporal offsets to provide a stronger inductive bias. Offsets from the pillar-wise event mean capture locally occupied centroids and temporal skew, while offsets from the pillar center define a consistent local coordinate. Together, these features encode both the spatial distribution and temporal layout of events within each pillar.

For the $n$-th event in pillar $j$, the resulting augmented representation is defined as a $D$ dimensional feature vector:
\begin{equation}
l_{j,n} = (x_{j,n}, y_{j,n}, \tau_{j,n}, p_{j,n}, x_{j,n,m}, y_{j,n,m}, \tau_{j,n,m}, x_{j,n,c}, y_{j,n,c}),
\end{equation}
where $(x_{j,n},y_{j,n})$ denotes the location,
$\tau_{j,n}\in[-1,1]$ denotes the normalized timestamp of the $n$-th event in pillar $j$ sorted in ascending timestamp order,
and $p_{j,n}$ indicates the polarity ($+1$ or $-1$). 
The subscript $m$ denotes the offset from the arithmetic mean of all events within the pillar.  
Moreover, let $(x_{j, c}, y_{j, c})$ denote the predefined center of the $x$-axis and $y$-axis of pillar $j$. Given that, $x_{j,n,c} = x_{j,n} - x_{j, c}$, $y_{j,n,c} = y_{j,n} - y_{j, c}$.

%while the subscript $c = (x_{j, c}, y_{j, c}, \tau_{j, c}$ denotes the offset from the predefined spatial and temporal center of the pillar.

Due to the sparsity of event streams, the set of pillars is predominantly empty, and a non-empty pillar typically contains only a small number of events. We exploit this sparsity by capping the number of non-empty pillars per sample to $P$ and the number of events per pillar to $N$, yielding a dense tensor representation of size $(D, P, N)$. If pillar $j$ contains fewer than $N$ events, it is zero-padded by appending dummy events to the end of its events already sorted in timestamp-order and assigning a mask value $m_{j,n} = 0$ to dummy event $n$ in pillar $j$ to avoid any processing.

Conversely, if a spatial pillar contains more than $N$ events---a rare case under the high-frequency sparse regime we target---we apply uniform random subsampling, following~\cite{langPointPillarsFastEncoders2019}. We favor random subsampling over deterministic strategies to avoid temporal bias: truncation (e.g., retaining only the most recent events) would discard early temporal dynamics and distort the underlying event distribution. In contrast, random subsampling preserves the statistical properties of the event stream, enabling an effective approximation of the intra-window trajectory under our polynomial basis.

%\paragraph{Event-Wise Feature Embedding.} 
\textbf{Event-Wise Feature Embedding.}
Following pillarization, we model each pillar as a \textit{continuous-time signal}, rather than an unordered set of events or a stack of fixed temporal bins. This choice is motivated by the fact that different objects are characterized not only by the presence of events but also by the trajectory of how their features evolve over time within the pillar. 

First, we assemble the augmented event vectors into a dense input tensor $\mathbf{X} \in \mathbb{R}^{D \times P \times N}$. For each event, a shared event-wise MLP, followed by BatchNorm and ReLU, maps the $D$-dimensional vector into a higher-dimensional latent space:
\begin{equation}
\mathbf{H} = \sigma\!\left(\mathrm{BN}\!\left(\mathbf{W}\mathbf{X}+\mathbf{b}\right)\right),\quad
\mathbf{H}\in\mathbb{R}^{C\times P\times N}
\end{equation}
where $C$ is the latent feature dimension. 
Let $H_{c,j,n}$ denote the feature value of channel $c$ for the $n$-th event in pillar $j$. 
%Let $\tau_{j,n}\in[-1,1]$ denote the normalized timestamp of the $n$-th event in pillar $j$, sorted in ascending order. 
We view each discrete feature $H_{c,j,n}$ as a point-sample of an underlying continuous function $h_{c,j}(\tau)$ that evolves over time within the pillar, such that $H_{c,j,n} \approx h_{c,j}(\tau_{j,n})$. 

%\paragraph{Continuous-Time Temporal Encoding via Polynomial Approximation.} 
\textbf{Continuous-Time Temporal Encoding via Polynomial Approximation.}
To rigorously model these within-pillar dynamics, we project the latent feature trajectory $h_{c,j}(\tau)$ onto a set of orthogonal polynomial basis functions defined over the Hilbert space $\mathcal{H} = L^2([-1,1])$. We select the Legendre polynomials \cite{abramowitz1964handbook}, denoted as $L_k(\tau)$ for degree $k \in \{0, \dots, K-1\}$, because they form a complete orthogonal basis on $[-1,1]$ satisfying:
\begin{equation}
\langle L_k, L_m \rangle = \int_{-1}^{1} L_k(\tau) L_m(\tau) d\tau = \frac{2}{2k+1} \delta_{km},
\end{equation}
where $\delta_{km}$ is the Kronecker delta. By the Hilbert projection theorem \cite{kreyszig1978introductory, luenberger1997optimization}, any square-integrable temporal signal can be exactly expanded as an infinite series $h_{c,j}(\tau) = \sum_{k=0}^{\infty} a_{c,j,k} L_k(\tau)$. The exact projection coefficients $a_{c,j,k}$ are obtained via the inner product:
\begin{equation}
a_{c,j,k} = \frac{2k+1}{2} \int_{-1}^{1} h_{c,j}(\tau) L_k(\tau) d\tau.
\label{eq:coeff}
\end{equation}

In practice, to maintain a computationally tractable, fixed-size representation for the neural network, we restrict our model to a $K$-dimensional subspace by computing a truncated Legendre expansion up to degree $K-1$. Crucially, the Hilbert projection theorem guarantees that this specific orthogonal projection yields the optimal approximation of the true continuous signal $h_{c,j}(\tau)$ within this subspace, minimizing the $L^2$ reconstruction error. 
A common practice is to select a value of $K$ value that supports reliable performance at minimal overhead, beyond which improvement is marginal.
Physically, in this optimal truncated representation, lower-order terms (e.g., $k=0, 1$) capture coarse temporal trends such as constant presence or linear drift, while higher-order terms encode fine-grained, non-linear variations in the event features without introducing arbitrary temporal binning.

In practice, the analytic function $h_{c,j}(\tau)$ is unobservable; we only have access to irregular, sparse samples $\{H_{c,j,n}\}$ at discrete timestamps $\{\tau_{j,n}\}$. Therefore, the continuous integral must be approximated numerically. Because event timestamps are generally non-uniformly distributed, naive mean or max pooling produces a highly biased estimate. To obtain a consistent, unbiased estimate, we approximate the temporal integral using numerical quadrature adapted for non-uniform temporal grids. Specifically, we utilize the trapezoidal rule \cite{press2007numerical, davis2007methods}, which assigns a temporal support weight $w_{j,n}$ to each event based on its local sparsity:
\begin{equation}
\label{eq:trape_weights}
w_{j,n} =
\begin{cases}
\frac{1}{2}\big(\tau_{j,2} - \tau_{j,1}\big), & n=1, \\
\frac{1}{2}\big(\tau_{j,n+1} - \tau_{j,n-1}\big), & 1<n<N, \\
\frac{1}{2}\big(\tau_{j,N} - \tau_{j,N-1}\big), & n=N.
\end{cases}
\end{equation}
Under this formulation, isolated events correctly represent larger temporal chunks, while tightly clustered events receive smaller individual weights, preventing rapid event bursts from disproportionately dominating the integral.

Using these weights, we compute the quadrature-corrected empirical moments $z_{c,j,k}$, which estimate the projection of the discrete event features onto the $k$-th Legendre polynomial:
\begin{equation}
\label{eq:moment}
z_{c,j,k}
=
\frac{1}{W_j}
\sum_{n=1}^{N}
m_{j,n}\, w_{j,n}\, H_{c,j,n}\, L_k(\tau_{j,n})
\qquad
W_j = \sum_{n=1}^{N} m_{j,n} w_{j,n}
\end{equation}
where $m_{j,n}\in\{0,1\}$ acts as a validity mask for zero-padded events in pillars with fewer than $N$ entries. Note that the constant Legendre scaling factor $\frac{2k+1}{2}$ from the exact analytical projection is omitted in this estimator, as it is naturally absorbed by the learnable linear coefficients in the subsequent layer. The normalization by $W_j$ ensures that the encoded features remain invariant to the total temporal active duration of the pillar.

\begin{algorithm}[htb]
\caption{Pillar Encoding (PE)}
\label{alg:pillar_encoding}
\textbf{Input:} Raw event set $E$ from temporal window $[t_1, t_2]$, spatial grid size $H \times W$ \\
\textbf{Parameters:} Max pillars $P$, max events $N$, polynomial degrees $K$, latent dims $C$ \\
\textbf{Output:} Dense pseudo-image tensor $\mathbf{I} 
\in \mathbb{R}^{C \times H \times W}$
\begin{algorithmic}[1]
\State Discretize $E$ into spatial pillars. Retain up to $P$ non-empty pillars.
\For{each pillar $j \in \{1, \dots, P\}$}
    \State Subsample events, or zero-pad events and append them to exactly $N$. 
    \State Set $m_{j,n} = 0$ if event $n$ is zero padded; otherwise, $m_{j,n} = 1$ (real event).
    \State Normalize absolute timestamps to $\tau_{j,n} \in [-1, 1]$.
    \State Sort intra-pillar events such that $\tau_{j,1} \le \tau_{j,2} \dots \le \tau_{j,N}$ for each event $n$ with $m_{j,n}$ = 1.
    \State Compute temporal and spatial offsets (mean-centered and spatial/temporal-centered).
    \State Construct the $D$-dimensional augmented feature vectors $l_{j,n}$.
    \State Compute non-uniform trapezoidal weights $w_{j,n}$ based on local sparsity (Eq.~\ref{eq:trape_weights}).
    \State Compute pillar weight normalizer: $W_j \gets \sum_{n=1}^{N} m_{j,n}\, w_{j,n}$
\EndFor
\State Assemble tensor $\mathbf{X} \in \mathbb{R}^{D \times P \times N}$ from $\{l_{j,n}\}$.
\State Extract latent features via shared MLP: $\mathbf{H} \gets \sigma\!\left(\mathrm{BN}\!\left(\mathbf{W}\mathbf{X}+\mathbf{b}\right)\right)$
\For{each pillar $j \in \{1, \dots, P\}$ and channel $c \in \{1, \dots, C\}$}
    \For{degree $k \in \{0, \dots, K-1\}$}
        \State Compute quadrature-corrected empirical moment (Eq.~\ref{eq:moment}):
        \State $z_{c,j,k} \gets \frac{1}{W_j} \sum_{n=1}^{N} m_{j,n}\, w_{j,n}\, H_{c,j,n}\, L_k(\tau_{j,n})$
    \EndFor
    \State Linearly combine temporal moments using learnable weights (Eq.~\ref{eq:represent}): 
    \State $r_{c,j} \gets \sum_{k=0}^{K-1} \alpha_{c,k}\, z_{c,j,k} + \beta_c$
\EndFor
\State Scatter pillar representations $\mathbf{R} \in \mathbb{R}^{C \times P}$ back to the $H \times W$ spatial grid to form $\mathbf{I}$.
\State \Return $\mathbf{I}$
\end{algorithmic}
\end{algorithm}

%\paragraph{Learnable Aggregation.} 
\textbf{Learnable Aggregation.}
Finally, we linearly combine the $K$ temporal moments using learnable per-channel coefficients to form the final pillar representation:
\begin{equation}
r_{c,j}
=
\sum_{k=0}^{K-1}\alpha_{c,k}\,z_{c,j,k}+\beta_c
\qquad
\mathbf{R}\in\mathbb{R}^{C\times P}
\label{eq:represent}
\end{equation}
where $\boldsymbol{\alpha} \in \mathbb{R}^{C\times K}$ and $\boldsymbol{\beta} \in \mathbb{R}^{C}$ are trainable parameters. These coefficients introduce only $C(K+1)$ additional parameters, making the encoding highly efficient. The parameters $C$ and $K$ act as tunable hyperparameters.

Once encoded, the features are scattered back to their original spatial locations to create a pseudo-image of size $(C, H, W)$. This pillar-based continuous-time representation enables efficient spatial feature learning while retaining the rich, asynchronous temporal dynamics of the raw event stream. 
At high inference frequencies (e.g., 200 Hz), $\Delta t$ is sufficiently small that cross-pillar motion blur is minimized; residual spatial trajectories are effectively aggregated by the receptive field of the downstream backbone.

% \begin{theorem}[Optimality of Legendre Projection]\label{thm:legendre}
% Let $h \in L^2([-1,1])$, and let $\mathcal{S}_K = \mathrm{span}\{L_0, L_1, \dots, L_{K-1}\}$, where $\{L_k\}_{k=0}^{K-1}$ are the Legendre polynomials, orthogonal with respect to the $L^2([-1,1])$ inner product. Define
% $$
% h_K(\tau) = \sum_{k=0}^{K-1} a_k L_k(\tau), \quad \text{with} \quad a_k = \frac{2k+1}{2} \langle h, L_k \rangle.
% $$
% Then $h_K$ is the unique orthogonal projection of $h$ onto $\mathcal{S}_K$, and hence $h_K = \arg\min_{g \in \mathcal{S}_K} \| h - g \|_{L^2}^2$.
% \end{theorem}
% %\begin{proof}
% %Proof provided in Appendix~\ref{sec:theory}.
% %\end{proof}

% \begin{theorem}[Bias of Naive Aggregation vs. Quadrature]\label{thm:quadruture}
% Let $\{\tau_n\}_{n=1}^N$ be an ordered sequence of timestamps with $\tau_1 < \cdots < \tau_N$, sampled according to an inhomogeneous temporal density $\rho(\tau)$. Define $H_n := h(\tau_n)$, where $h : [-1,1] \to \mathbb{R}$ is the underlying continuous-time signal. Let $\Delta \tau_n := \tfrac{1}{2}(\tau_{n+1} - \tau_{n-1})$ denote the local temporal spacing for interior points, with appropriate one-sided definitions at the boundaries.
% The naive empirical mean estimator $\hat{I}_k^{\mathrm{naive}}$ is biased by $\rho(\tau)$, whereas the quadrature estimator $\hat{I}_k^{\mathrm{quad}}$ asymptotically recovers the unscaled projection integral $I_k$.
% \end{theorem}

% %\begin{proof}
% %Proof provided in Appendix~\ref{sec:theory}.
% %\end{proof}

Our PE formulation is grounded in two theoretical guarantees (proofs provided in Appendix~\ref{sec:theory}). First, by defining the coefficients as in Eq.~\ref{eq:coeff}, our truncated Legendre series acts as the unique orthogonal projection of the underlying signal $h_{c,j}(\tau)$ for all $c, j$ at $\tau$ onto the subspace of Legendre polynomials, minimizing the $L^2$ reconstruction error. 
Second, our quadrature estimator in Eqs.~\ref{eq:trape_weights} and~\ref{eq:moment}
asymptotically recovers the true projection integral. Our pseudocode is detailed in Algorithm~\ref{alg:pillar_encoding}.

%A theoretical justification of PE is given in Appendix~\ref{sec:theory}.

%The pseudocode of our PE method is given in Algorithm~\ref{alg:pillar_encoding}.

\subsection{Frequency-Aware Training}
\label{sec:fat}
We divide our training strategy into two sequential phases. First, we obtain dense supervision at multiple temporal frequencies. Second, we utilize a soft mean-teacher curriculum to enforce representational robustness across these frequencies.

%\paragraph{Phase 1: Multi-Frequency Bounding Box Generation.}
%\label{para:fat_phase1}
\textbf{Phase 1: Multi-Frequency Bounding Box Generation.}
The first phase aims to construct dense, frequency-specific annotations from sparsely labeled data. Let $\mathcal{E}$ denote the event stream, and $\{(t_k,\mathcal{B}_{t_k})\}$ the sparse ground-truth annotations available only at canonical timestamps $t_k$, which typically correspond to a canonical frequency $f_c$ (e.g., $20$\,Hz). Because annotations are not available at higher temporal frequencies, we first train an event detector, e.g., EvRT-DETR \cite{torbunovEvRTDETRLatentSpace}, augmented by the proposed PE using the original labeled data in the canonical setting.

Based on this PE-augmented detector, we generate dense annotations at target frequencies $f \in \mathcal{F}=\{f_{\min},\dots,f_{\max}\}$ by running inference at interpolated timestamps in a sliding-window manner. For a target frequency $f$, the detector is applied at uniformly spaced inference timestamps $t_k = k \cdot \Delta t_f, \text{where }  \Delta t_f = 1/f$. Each input is constructed using a fixed canonical accumulation window $\Delta t_c = 1/f_c$ over the interval $[t_k-\Delta t_c,\,t_k]$, effectively decoupling the temporal sampling rate from the event representation. We generate annotations progressively (e.g., first at 40\,Hz, scaling up to 200\,Hz), consistently maintaining temporal association through tracking-by-detection~\cite{bewleySimpleOnlineRealtime2016}.

% Based on this PE-augmented detector, we generate dense annotations at target frequencies $f \in \mathcal{F}=\{f_{\min},\dots,f_{\max}\}$ by running inference at interpolated timestamps in a sliding-window manner. 
% Crucially, inference is performed using the canonical event representation but evaluated at denser timestamps associated with each target frequency. 
% For a target frequency $f$, the detector is applied at timestamps spaced by $\Delta t_f = 1/f$, where each input is formed by accumulating events over the interval $[t-\Delta t_c,\,t]$ using a canonical accumulation window $\Delta t_c = 1/f_c$. We generate annotations progressively (e.g., first at $40$\,Hz, scaling up to $200$\,Hz), consistently maintaining temporal association through tracking-by-detection~\cite{bewleySimpleOnlineRealtime2016}.

To ensure high-quality supervision, we adopt decoupled thresholds: a lower tracking threshold (e.g., 0.3) to favor recall by reducing missed detections, and a higher detection threshold (e.g., 0.6) to promote precision by limiting false positives (see Appendices~\ref{appendix:result_details} and~\ref{app:reproduce} for details). We then associate detections across time and discard short tracklets as spurious. To account for denser sampling at higher frequencies, the minimum tracklet length is increased proportionally with $f$. The remaining reliable trajectories are used to interpolate annotations at timestamps with missing detections.

The resulting dense, frequency-specific training set is $\tilde{\mathcal{D}}_f = \{(\mathbf{r}_{f,i}, \tilde{\mathcal{B}}_{f,i})\}_{i=1}^{M_f}$, where $\mathbf{r}_{f,i}$ denotes an event representation---PE output---at frequency $f$, and $\tilde{\mathcal{B}}_{f,i}$ the corresponding supervision, consisting of either ground-truth annotations or labels interpolated along tracked trajectories. Notably, these dense pseudo labels (bounding boxes) are used to enable offline FAT, but are not utilized during online inference in FATE.

\textbf{Phase 2: Soft Mean-Teacher Self-Training.}
%In the second phase, we use the dense multi-frequency supervision constructed in Phase 1 to train a detector robust to varying temporal resolutions. 
The proposed frequency-aware self-training adopts a soft mean-teacher framework~\cite{xuEndToEndSoftTeacher2021, anttimeanTeacher2017}. In particular, it exploits three sources of supervision: (i) ground-truth annotations at the originally labeled timestamps, (ii) interpolated labels from Phase 1, and (iii) pseudo-labels produced by the teacher network during self-training. The first two provide explicit supervision, while the third provides a teacher–student consistency constraint that regularizes the student model.

Let $\mathbf{R}_f$ and $\mathbf{R}_c$ denote the sets of event representations at frequencies $f$ and $f_c$ (canonical frequency). We write $\mathbf{r}_f \in \mathbf{R}_f$ and $\mathbf{r}_c \in \mathbf{R}_c$ for individual samples. The teacher network processes $\mathbf{r}_c$, while the student network processes $\mathbf{r}_f$.
%We instantiate a student network $S_{\theta_s}$ and a teacher network $T_{\theta_t}$. 
Given inputs $\mathbf{r}_f$ and $\mathbf{r}_c$, a student network $S_{\theta_s}$ and a teacher network $T_{\theta_t}$ output predictions:
\begin{equation}
\left\{(s_{s,j}, b_{s,j})\right\}_{j=1}^{N_s}
= S_{\theta_s}(\mathbf{r}_f),
\qquad
\left\{(s_{t,j}, b_{t,j})\right\}_{j=1}^{N_t}
= T_{\theta_t}(\mathbf{r}_c).
\end{equation}
Instead of uniformly sampling frequencies, we employ a linear curriculum schedule over the frequency-specific datasets. At training round $r$, the student frequency is sampled according to $f \sim p_r(f)$, which gradually shifts the probability mass toward higher, sparser frequencies as training progresses.

For bounding box regression, the DETR \cite{detr} head uses a weighted combination of $\ell_1$ and GIoU losses, while the YOLOX \cite{yolox2021} head primarily uses the standard IoU regression loss:
\begin{equation}
\label{eq:box-loss}
   \mathcal{L}_{\mathrm{box}}(b,\hat b) = \lambda_{\ell_1}\|b-\hat b\|_1 + \lambda_{\mathrm{iou}} \mathcal{L}_{\mathrm{GIoU}}(b,\hat b)
\end{equation}

For the DETR head, let $\mathcal{M}_i$ denote the set of matched prediction--target pairs for sample $i$, established via bipartite matching. While the regression loss $\mathcal{L}_{\mathrm{box}}$ is evaluated exclusively over this matched set $\mathcal{M}_i$, the classification loss explicitly penalizes all remaining mismatched predictions as a "no object" ($\emptyset$) background class, a mechanism crucial for suppressing false positive detections.

Each target is represented as a pair $(c_v, b_v)$, where $c_v$ denotes the class label and $b_v \in \mathbb{R}^4$ the corresponding ground-truth bounding box. 
%The targets are drawn from the dense dataset produced in Phase 1. 
Given that, the detection objective is formulated as:
% \begin{equation}
% \mathcal{L}_{\mathrm{det}}
% =
% \frac{1}{|\mathcal{U}|}
% \sum_{i\in\mathcal{U}}
% \sum_{(u,v)\in \mathcal{M}_i}
% w_v
% \left[
% \ell_{\mathrm{cls}}(s_{s,u}, c_v)
% +
% \ell_{\mathrm{box}}(b_{s,u}, b_v)
% \right],
% \end{equation}
\begin{equation}
\mathcal{L}_{\mathrm{det}}
=
\frac{1}{|\mathcal{U}|}
\sum_{i\in\mathcal{U}}
\left[
\sum_{(u,v)\in \mathcal{M}_i}
w_v
\left(
\ell_{\mathrm{cls}}(s_{s,u}, c_v)
+
\ell_{\mathrm{box}}(b_{s,u}, b_v)
\right)
+
\sum_{u\notin\mathcal{M}_i}
\ell_{\mathrm{cls}}(s_{s,u}, \emptyset)
\right]
\end{equation}

where $\mathcal{U}$ is the mini-batch. 
The supervision weight $w_v$ distinguishes ground-truth labels from generated labels:
\begin{equation}
w_v =
\begin{cases}
1, & \text{if } (c_v,b_v) \text{ is a ground-truth annotation;} \\
p_v, & \text{if } (c_v,b_v) \text{ is a label generated in Phase 1,}
\end{cases}
\label{eq:w_v}
\end{equation}
where $p_v \in [0,1]$ denotes the scalar confidence assigned by the teacher to the matched generated label.

Additionally, we impose a consistency loss between the teacher's canonical predictions and the student's high-frequency predictions. The teacher network provides pseudo-labels during training, which serve as soft targets for regularizing the student model.
We use Kullback--Leibler (KL) divergence for classification consistency and $\ell_1$ loss for localization consistency:
\begin{equation}
\mathcal{L}_{\mathrm{cons}}
=
\frac{1}{|\mathcal{U}|}
\sum_{i\in\mathcal{U}}
\frac{1}{|\mathcal{A}_i|}
\sum_{(u,v)\in\mathcal{A}_i}
\left[
\mathrm{KL}(q_{t,v}\,\|\,q_{s,u})
+
\|b_{t,v} - b_{s,u}\|_1
\right],
\end{equation}
where $\mathcal{A}_i$ denotes the set of matched teacher--student prediction pairs for sample $i$ obtained via Hungarian matching~\cite{detr}. Moreover, $q_{t,v}$ and $q_{s,u}$ (respectively, $b_{t,v}$ and $b_{s,u}$) denote the class probability distributions over $K$ object categories (bounding box predictions) produced by the teacher and student networks.

The student weights $\theta_s$ are updated via gradient descent, while the teacher weights $\theta_t$ are updated using an exponential moving average (EMA):
\begin{equation}
\theta_t \leftarrow \gamma \theta_t + (1-\gamma)\theta_s, \qquad \gamma \in [0,1).
\end{equation}
Rather than strictly inheriting canonical-frequency blind spots, our temporal tracking interpolation actively recovers missing intermediate detections, while the EMA-based curriculum mitigates the risk of the student overfitting to isolated base-detector errors.

%% file: sections/4.eval5.tex
We evaluate FATE against recent event-based object detection methods on two standard benchmarks, analyzing performance across a range of temporal frequencies.

\subsection{Evaluation Setup}

%\paragraph{Datasets.}
\textbf{Datasets.} Experiments are conducted on Gen1~\cite{tournemireLargeScaleEventbased2020} and 1Mpx~\cite{perotLearningDetectObjects}, which differ substantially in spatial resolution and event density. This setup enables evaluation under both sparse and high-resolution regimes, assessing robustness and scalability. Gen1 provides sparse annotations from first-generation sensors (1--4,Hz), whereas 1Mpx leverages fourth-generation sensors with significantly higher temporal density ($\sim$60 Hz). To study frequency generalization, we evaluate models at the canonical 50 ms window (20 Hz), as well as under shorter temporal windows corresponding to higher frequencies from 40 Hz to 200 Hz (25--5 ms).

\textbf{Baselines and Implementation Details.} We compare FATE against four state-of-the-art baselines: RVT-B \cite{gehrigRecurrentVisionTransformers2023}, GET \cite{pengGETGroupEvent2023}, S5-ViT-B \cite{zubicStateSpaceModels2024}, and EvRT-DETR-B \cite{torbunovEvRTDETRLatentSpace}, spanning convolutional, recurrent, transformer, and state-space paradigms. To evaluate mAP, we integrate FATE into two representative detectors: S5-ViT-B (S5-ViT backbone, YOLOX head) and EvRT-DETR (ResNet-50 backbone, recurrent transformer). Baselines use official checkpoints for fairness. See Appendix~\ref{app:reproduce} for full implementation details.

\subsection{Comparison to State-of-the-Art}
%\input{tables/2}

% Gen 1 Table with Ground Truth Annotations
\begin{table}[htb]
\vspace{-0.6cm}
\centering
\caption{mAP at different operating frequencies on the Gen1 test set}
\label{tab:freq_scaling_gen1}
\setlength{\tabcolsep}{4pt}
\scriptsize
\begin{tabular}{l|ccccc}
\toprule
& \multicolumn{5}{c}{Frequency (Hz)} \\
\cmidrule(lr){2-6}
Model & 20 Hz & 40 Hz & 80 Hz & 100 Hz & 200 Hz \\
\midrule
RVT-B \cite{gehrigRecurrentVisionTransformers2023} & 47.20 & 35.13 & 21.98 & 18.61 & 8.35 \\
GET \cite{pengGETGroupEvent2023} & 47.90 & 34.15 & 19.97 & 15.13 & 5.35 \\
S5-ViT-B \cite{zubicStateSpaceModels2024}  & 47.40 & 46.44 & 45.08 & 42.49 & 39.84 \\
EvRT-DETR-B \cite{torbunovEvRTDETRLatentSpace} & \underline{52.70} & \underline{50.60} & 45.00 & 42.20 & 32.00 \\
\midrule

\textbf{\name{}-S (PE + FAT + S5-ViT-B)} & 48.20 & 46.90 & \underline{47.02} & \underline{43.18} & \underline{40.80} \\

\textbf{\name{}-E (PE + FAT + EvRT-DETR-B)} & \textbf{53.14} & \textbf{51.60} & \textbf{48.40} & \textbf{46.80} & \textbf{42.38} \\

\bottomrule
\end{tabular}
\vspace{-0.35cm}
\end{table}

% 1Mpx Ground Truth Table
\begin{table}[htb]
\vspace{-0.25cm}
\centering
\caption{mAP at different operating frequencies on the 1Mpx test set}
\label{tab:freq_scaling_1mpx}
\setlength{\tabcolsep}{4pt}
\scriptsize
\begin{tabular}{l|ccccc}
\toprule
& \multicolumn{5}{c}{Frequency (Hz)} \\
\cmidrule(lr){2-6}
Model & 20 Hz & 40 Hz & 80 Hz & 100 Hz & 200 Hz \\
\midrule
RVT-B \cite{gehrigRecurrentVisionTransformers2023} & 47.40 & 42.51 & 33.20 & 30.29 & 16.36 \\
GET \cite{pengGETGroupEvent2023}  & 48.40 & 40.51 & 30.30 & 28.11 & 15.44 \\
S5-ViT-B \cite{zubicStateSpaceModels2024} & 47.20 & 46.49 & 46.11 & 45.80 & 39.70 \\
EvRT-DETR-B \cite{torbunovEvRTDETRLatentSpace} & \underline{50.40} & 48.20 & 43.10 & 41.00 & 33.40 \\
\midrule

\textbf{\name{}-S (PE + FAT + S5-ViT-B)} & 48.80 & \underline{48.26} & \textbf{47.30} & \textbf{46.14} & \underline{40.20} \\

\textbf{\name{}-E (PE + FAT + EvRT-DETR-B)}  & \textbf{51.04} & \textbf{48.90} & \underline{46.32} & \underline{45.77} & \textbf{40.54}  \\

\bottomrule
\end{tabular}
\vspace{-0.3cm}
\end{table}

As frequency increases, all methods degrade due to extreme sparsity from shorter accumulation windows (Tables~\ref{tab:freq_scaling_gen1} and~\ref{tab:freq_scaling_1mpx}). 
However, FATE consistently outperforms prior work, with gains that become more pronounced at higher frequencies. At 20\,Hz, \textbf{\name{}-E} achieves 53.14 mAP on Gen1 and 51.04 mAP on 1Mpx, improving over EvRT-DETR-B by +0.44 and +0.64 mAP. Under the most challenging 200\,Hz setting, the margin widens substantially: \name{}-E exceeds the strongest high-frequency baseline (S5-ViT-B) by +2.54 mAP on Gen1 and +0.84 mAP on 1Mpx, demonstrating strong robustness under extreme sparsity.

Importantly, these gains generalize across different architectural paradigms. Integrating FATE into S5-ViT-B (FATE-S) also yields consistent improvements over S5-ViT-B, particularly in high-frequency regimes (+0.96 mAP on Gen1 and +0.50 mAP on 1Mpx at 200 Hz). This confirms that FATE's benefits are not tied to a specific detector design, but rather address fundamental challenges in event-based representation and supervision-inference frequency mismatch.

Qualitative results (Appendix~\ref{app:qual}) show that under short temporal windows, FATE maintains stable predictions—especially for small or weak targets—avoiding the missed detections of competing methods. These gains incur negligible overhead (+0.1 ms latency, less than 0.15 M parameters; Appendix~\ref{appendix:result_details}). This robustness stems from: (i) continuous-time polynomial pillar encoding, which preserves fine-grained temporal dynamics under sparsity, and (ii) frequency-aware training, which mitigates train--test mismatch across temporal resolutions.

% "architecture-agnostic" can be too bold.
%Importantly, these gains are architecture-agnostic. Applying FATE to S5-ViT-B (\textbf{\name{}-S}) yields consistent improvements over its backbone, particularly at high frequencies (+0.96 mAP on Gen1 and +0.50 mAP on 1Mpx at 200\,Hz), confirming that the proposed components generalize across detector designs.

% Qualitative results (Appendix~\ref{app:qual}) further highlight this advantage: under short temporal windows, competing methods exhibit missed detections and temporal instability, whereas FATE maintains stable predictions, particularly for small or weakly activated targets. These gains come with minimal overhead—only 0.1\,ms additional inference latency and minimal parameters for the proposed pillar encoding (Appendix~\ref{appendix:result_details}). We attribute this robustness to two key factors: (i) pillar encoding via a continuous-time polynomial approximation, which preserves fine-grained temporal structure under sparse observations, and (ii) frequency-aware training, which mitigates train--test mismatch by enforcing consistency across temporal resolutions.

%% file: sections/6.ablation2.tex
%Tables~\ref{tab:ablation_components_fate_e} and \ref{tab:ablation_components} analyze the contributions of PE and FAT
%, and their key design parameters 
%on the Gen1 benchmark.

%\paragraph{Effect of PE and FAT.}

% Tables~\ref{tab:ablation_components_fate_e} and \ref{tab:ablation_components} analyze the contributions of PE and FAT
% on the Gen1 benchmark.
% The tables show that both PE and FAT are critical for high-frequency performance, where short accumulation windows produce highly sparse event streams. 
% PE outperforms EvRT-DETR-B and S5-ViT-B in 6/10 settings (e.g., +4.40 mAP at 200\,Hz over EvRT-DETR-B) and remains competitive otherwise, demonstrating its ability to preserve intra-window temporal structure. FAT delivers even larger gains, outperforming baselines in 9/10 settings (e.g., +9.10 mAP at 200\,Hz over EvRT-DETR-B), indicating that multi-frequency supervision effectively reduces the train--test mismatch without increasing model capacity. 

Tables~\ref{tab:ablation_components_fate_e} and \ref{tab:ablation_components} ablate PE and FAT on Gen1, confirming both are critical for sparse, high-frequency inference. PE outperforms baselines in 6/10 settings (e.g., +4.40 mAP at 200 Hz over EvRT-DETR-B) by preserving intra-window temporal dynamics. FAT yields even broader gains, leading in 9/10 settings (e.g., +9.10 mAP at 200,Hz over EvRT-DETR-B), proving that multi-frequency supervision mitigates train--test mismatch without adding capacity overhead.

Combined, PE and FAT achieve the best overall performance, supporting our central claim that robust temporal scaling requires both continuous representations and frequency-aligned supervision. Additional ablations on frequency-aware training strategies, polynomial degree $K$, pillar size, and temporal capacity $C$ are provided in Appendix~\ref{appendix:result_details}.

%provides consistent gains at 80--200\,Hz (e.g., +4.40 mAP at 200\,Hz) by preserving intra-window temporal structure without degrading standard-frequency performance. 

\noindent \textbf{Summary.} PE alleviates the representational bottleneck of short event windows, FAT mitigates cross-frequency supervision mismatch, and their combination enables strong performance across frequencies.

\input{tables/3}

%% file: tables/3.tex
% Ablation Gen1 (DONE)
\begin{table*}[htb]
\vspace{-0.2cm}
\centering
\caption{Ablation of the FATE-E (PE + FAT + EvRT-DETR-B) components on the Gen1 test set} 
\label{tab:ablation_components_fate_e}
\setlength{\tabcolsep}{4pt}
\scriptsize
\begin{tabular}{lccccc c}
\toprule
\multirow{2}{*}{Setting} 
& \multicolumn{5}{c}{mAP (\%) $\uparrow$}
& \multicolumn{1}{c}{Model Size} \\
\cmidrule(lr){2-6} \cmidrule(lr){7-7}
& 20\,Hz & 40\,Hz & 80\,Hz & 100\,Hz & 200\,Hz
& \#Params (M) \\
\midrule

EvRT-DETR-B 
& 52.70 & 50.60 & 45.00 & 42.20 & 32.00 
& 57.14 \\

+ PE  
& 51.94 & 50.31 & 46.06 & 44.10 & 36.40 
& 57.16 \\

+ FAT 
& \underline{52.36} & \underline{51.24} & \textbf{48.50} & \underline{46.48} & \underline{41.10} 
& 57.14 \\

+ PE + FAT 
& \textbf{53.14} & \textbf{51.60} & \underline{48.40} & \textbf{46.80} & \textbf{42.38}  
& 57.16 \\

\bottomrule
\end{tabular}
%\vspace{-0.2cm}
\end{table*}

% \Comment{There was an bug in calculating the params. Updated it}

% Ablation Gen1 (DONE)
\begin{table*}[htb]
\vspace{-0.25cm}
\centering
\caption{Ablation of the FATE-S (PE + FAT + S5-ViT-B) components on the Gen1 test set} 
\label{tab:ablation_components}
\setlength{\tabcolsep}{4pt}
\scriptsize
\begin{tabular}{lccccc c}
\toprule
\multirow{2}{*}{Setting} 
& \multicolumn{5}{c}{mAP (\%) $\uparrow$}
& \multicolumn{1}{c}{Model Size} \\
\cmidrule(lr){2-6} \cmidrule(lr){7-7}
& 20\,Hz & 40\,Hz & 80\,Hz & 100\,Hz & 200\,Hz
& \#Params (M) \\
\midrule

S5-ViT-B 
& 47.40 & 46.44 & 45.08 & 42.49 & 39.84 
& 18.19 \\

+ PE  
& 47.70 & 46.10 & 45.52 & 43.08 & 39.30 
& 18.33 \\

+ FAT 
& \underline{48.08} & \underline{46.78} & \underline{46.50} & \textbf{43.24} & \underline{40.38} 
& 18.19 \\

+ PE + FAT 
& \textbf{48.20} & \textbf{46.90} & \textbf{47.02} & \underline{43.18} & \textbf{40.80}
& 18.33 \\

\bottomrule
\end{tabular}
\vspace{-0.4cm}
\end{table*}

%% file: sections/7.conclusion.tex
To alleviate a fundamental limitation in event-based object detection---performance degradation at high operating frequencies due to event sparsity---we introduced the FATE framework, uniting Pillar Encoding (PE) with Frequency-Aware Training (FAT). PE provides a continuous-time, $L^2$-optimal polynomial representation that retains fine-grained temporal dynamics while avoiding internal temporal sub-binning. Concurrently, FAT mitigates the train--test frequency mismatch by leveraging temporally dense pseudo-labels to promote robustness across the frequency spectrum. By generalizing across different architectural paradigms, FATE consistently outperforms strong baselines, with higher performance gains under high-frequency conditions (up to 200 Hz). Overall, our results underscore the critical importance of continuous-time modeling and cross-frequency supervision for robust, high-speed event-based perception.

%% file: sections/8.appendix.tex
\section{Theoretical Justifications for Pillar Encoding}
\label{sec:theory}
\input{sections/justify3}

\section{Additional Quantitative Results}
\label{appendix:result_details}

\subsection{Detection Performance Preservation}

Figure~\ref{fig:freq-pair} shows the mAP of S5-ViT-B, EvRT-DETR-B, FATE-S, and FATE-E on Gen1, highlighting how well each method preserves performance as frequency increases from 20\,Hz to 200\,Hz. Both FATE-E and FATE-S consistently outperform their respective backbones, with gains that grow at higher frequencies. 

While FATE-E achieves stronger overall performance, the gap between FATE-E and FATE-S narrows as frequency increases. This trend reflects the advantage of state-space modeling in S5-ViT-B, which captures temporal dynamics more effectively under extreme sparsity.

\begin{figure}[h]
  \centering
  \includegraphics[width=0.5\textwidth]{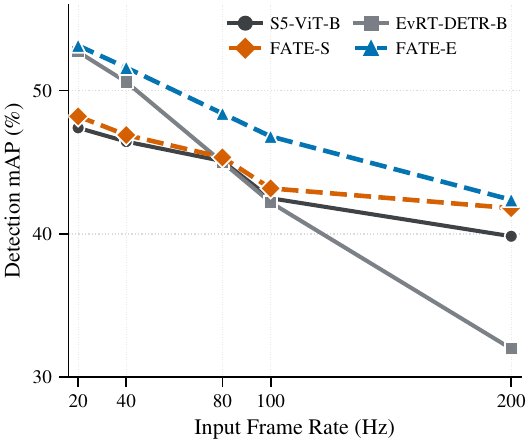}
  \caption{Comparison of mAP across different operating frequencies on Gen1.}
  \label{fig:freq-pair}
\end{figure}

\subsection{Effect of Dense Temporal Supervision}

We randomly sample 10,000 predicted bounding boxes from the Gen1 test set using two baseline detectors augmented with PE: PE + S5-ViT-B and PE + EvRT-DETR-B, both operating at 20\,Hz. Figure~\ref{fig:pr-thresholds-fate} shows the precision–recall trade-off for the \textit{car} and \textit{pedestrian} classes. Pseudo-labels for cars consistently achieve higher precision and recall than those for pedestrians, likely due to the more consistent geometric structure of vehicles.

\paragraph{Decoupled Thresholds.} To ensure high-quality supervision, we adopt decoupled thresholds: a lower tracking threshold of 0.3 to favor recall (reducing missed detections), and a higher detection threshold of 0.6 to promote precision (limiting false positives). As shown in Figure~\ref{fig:pr-thresholds-fate}, these choices provide a consistent balance of high recall and precision, respectively.

\paragraph{Pseudo Label Quality.}
Table~\ref{tab:pseudo_quality_class} provides a detailed evaluation of pseudo-label quality across frequencies from 20\,Hz to 200\,Hz, revealing three key insights for downstream supervision.

\begin{itemize}
    \item \textit{Temporal sparsity and performance decay:}
    All metrics---precision, recall, and localization accuracy (Median IoU)---degrade monotonically as frequency increases. At 200\,Hz, the shortened accumulation window leads to sparse event representations, making it difficult to recover coherent object geometry. For example, the proportion of boxes with $\text{IoU} > 0.5$ for cars drops from 83.0\% at 20\,Hz to 65.2\% at 200\,Hz for PE + EvRT-DETR-B, highlighting a trade-off between temporal resolution and label reliability.

    \item \textit{Architectural robustness:}
    Across all frequencies, PE + EvRT-DETR-B outperforms PE + S5-ViT-B, in line with the EvRT-DETR-B and S5-ViT-B comparisons in the main text. The DETR-based model maintains stronger spatial alignment, achieving a median IoU $\geq 0.5$ for cars even at 200 Hz, whereas the ViT-based baseline drops to 0.47, indicating greater robustness to noise and sparsity in high-frequency event data. Nevertheless, FATE-E and FATE-S, which adopt PE augmented by FAT, improve EvRT-DETR-B and S5-ViT-B, respectively, demonstrating consistent gains across heterogeneous architectures.
    
    % Across all frequencies, PE + EvRT-DETR-B consistently outperforms PE + S5-ViT-B, conforming to the results in the main text. The DETR-based model maintains stronger spatial alignment, achieving a Median IoU $\geq 0.50$ for cars even at 200 Hz, whereas the ViT-based baseline drops to 0.47. This suggests greater robustness to the noise and sparsity in high-frequency event data. Independent of individual backbones, FATE-E and FATE-S enhances the performance of EvRT-DETR-B and S5-ViT-B with heterogeneous architectures, respectively.

    \item \textit{Class-wise variance (cars vs. pedestrians):}
    A consistent gap exists between object categories. Pseudo-labels for cars exhibit higher quality than those for pedestrians (e.g., 0.83 vs.\ 0.72 precision at 20\,Hz), likely due to larger spatial extent and more rigid motion. In contrast, pedestrians suffer from fragmentation effects at high frequencies, with recall decreasing to 0.32–0.36 at 200 Hz.
    
\end{itemize}

\paragraph{Training Strategy Ablations.}
Furthermore, we compare FAT to three ablated training strategies across frequencies:
\begin{itemize}
    \item \textit{Multi-frequency training (no additional supervision):} The detector is trained across multiple accumulation windows (20--200\,Hz) using only sparse ground-truth annotations. For each frequency, the nearest labels are reused without introducing intermediate supervision, isolating the effect of multi-frequency exposure.

    \item \textit{Consistency-only training (no densification):} A consistency loss is enforced between predictions at different frequencies, while supervision remains limited to sparse ground-truth annotations, without generating dense labels.

    \item \textit{Naive annotation interpolation:} Dense supervision is introduced by linearly interpolating ground-truth bounding boxes across timestamps to generate high-frequency labels. These interpolated annotations are used for training at all frequencies, assuming smooth motion.
\end{itemize}

Table~\ref{tab:fat_component_ablation} shows that multi-frequency training improves high-frequency performance over the base detector (e.g., +4.08 mAP at 200 Hz for PE + EvRT-DETR-B) but leads to suboptimal accuracy at the canonical training frequency. Consistency-only training yields comparable results. Introducing dense supervision via naive interpolation improves mAP across all frequencies, motivating FAT, but its effectiveness is limited by noisy supervision (e.g., when adjacent frames do not share the same object instance). FAT achieves the best performance across all frequencies, indicating that integrating multi-frequency training, student-teacher consistency, and pseudo-label generations based on tracking-by-detection is key to robust detection across varying temporal resolutions in event-based settings.

%Table~\ref{tab:fat_component_ablation} shows that training with multiple frequencies improves high-frequency performance compared to the base detector (e.g., +4.08 mAP at 200Hz over PE + EvRT-DETR-B), but results in sub-optimal accuracy at the canonical frequency at which the base detector is trained. Consistency-only training yields comparable performance to multi-frequency training. Introducing dense supervision via naive interpolation improves mAP across all frequencies, which motivates the design of FAT.  However, its effectiveness is limited by noisy supervision (e.g., if two adjacent frames do not share the same object instance, interpolation cannot be defined). FAT achieves the best performance across all frequencies, demonstrating that combining dense supervision with teacher-based pseudo-labeling is essential for robust detection across varying temporal resolutions for event-based object detection.

\begin{figure*}[t]
      \centering
      \begin{subfigure}[t]{0.24\textwidth}
          \centering
          \includegraphics[width=\linewidth]{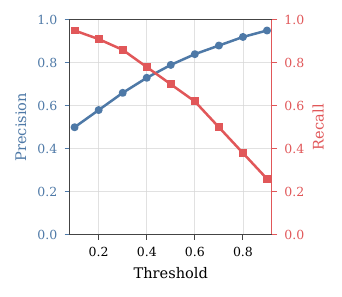}
          \caption{\footnotesize Cars, PE+EvRT-DETR-B}
          \label{fig:pr-fate-e-car}
      \end{subfigure}
      \hfill
      \begin{subfigure}[t]{0.24\textwidth}
          \centering
          \includegraphics[width=\linewidth]{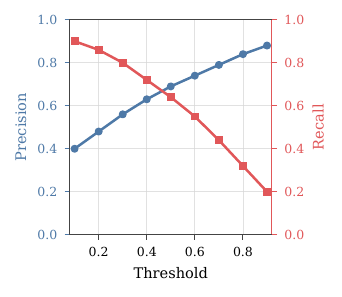}
          \caption{\footnotesize Pedestrians, PE+EvRT-DETR-B}
          \label{fig:pr-fate-e-ped}
      \end{subfigure}
      \hfill
      \begin{subfigure}[t]{0.24\textwidth}
          \centering
          \includegraphics[width=\linewidth]{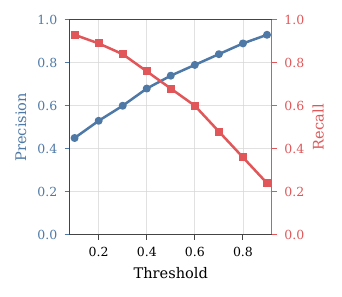}
          \caption{\footnotesize Cars, PE+S5-ViT-B }
          \label{fig:pr-fate-s-car}
      \end{subfigure}
      \hfill
      \begin{subfigure}[t]{0.24\textwidth}
          \centering
          \includegraphics[width=\linewidth]{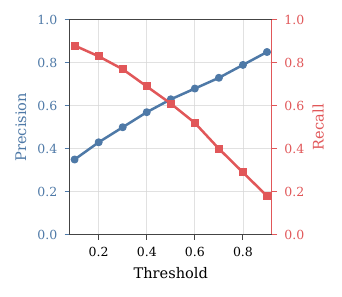}
          \caption{\footnotesize Pedestrians, PE+S5-ViT-B}
          \label{fig:pr-fate-s-ped}
      \end{subfigure}

      \caption{Precision and recall of pseudo-labels for cars and pedestrians on 10K samples randomly drawn from Gen1.} 
      \label{fig:pr-thresholds-fate}
  \end{figure*}

\input{tables/7}
\input{tables/4}

\input{tables/5}

\input{tables/8}

\subsection{Sensitivity Ablations of Pillar Encoding}

\paragraph{Impact of Temporal Channel Dimension.}
Table~\ref{tab:fe_channel_ablation} presents the effect of the temporal channel dimension, $C$, on detection performance across different operating frequencies. Empirically, we observe a consistent performance peak at $C=64$, which yields the highest mAP at 20 Hz, 100 Hz, and 200 Hz among the evaluated configurations. Increasing the capacity from $C=32$ to $C=64$ leads to steady gains, suggesting that sufficient channel width is necessary to capture complex temporal dynamics effectively.

However, further scaling to $C=80$ and $C=96$ results in continuous performance degradation. Given the sparse and noisy nature of event data, excessive temporal capacity may encourage overfitting to localized sensor noise rather than promoting robust, generalizable motion representations. Notably, the configuration at $C=64$ achieves best performance within our tested range while introducing only 0.01M additional parameters compared to the $C=32$ baseline, indicating a favorable trade-off between temporal expressiveness and model efficiency.

\paragraph{Efficiency Ablation}
Table~\ref{tab:efficiency} highlights the efficiency of our proposed approach. Integrating PE into the EvRT-DETR-B baseline effectively compresses the continuous event stream while introducing a negligible latency overhead of only 0.1 ms, which is 0.87--1.16\% of the inference latency.

However, applying PE alone leads to a slight performance drop, suggesting that standard backbones are not well-suited to fully exploit the resulting dense representation without dedicated temporal modeling. Incorporating the offline FAT module resolves this limitation, boosting performance to a peak mAP of 53.14. Notably, this gain is achieved without any additional parameters or inference latency compared to the PE-only variant.

Overall, FATE-E and FATE-S consistently outperforms the strong EvRT-DETR-B and S5-ViT-B baselines, respectively, with minimal overhead  of additional parameters and 0.1 ms ($\sim$1\%) end-to-end inference latency increase on an NVIDIA T4 GPU.

\section{Qualitative Results}
\label{app:qual}

Figure~\ref{fig:bbox_comparison} presents a qualitative comparison of bounding boxes at 20\,Hz. The strongest baselines exhibit missed detections and temporal instability, whereas FATE-E, equipped with our PE at inference, produces stable predictions, particularly for small or weakly activated targets.

Figures~\ref{fig:bbtracking_comparison} and~\ref{fig:bbtracking_comparison2} further highlight the impact of our FAT. The ablation shows that FATE-E trained with bounding box tracking achieves superior performance over the variant without detection-by-tracking supervision.

%\subsection{Qualitative Comparisons of Event %Detectors}

%\subsection{Comparisons under Different Frequencies}

\begin{figure}[h]
  \centering
  \includegraphics[width=.95\textwidth, keepaspectratio]{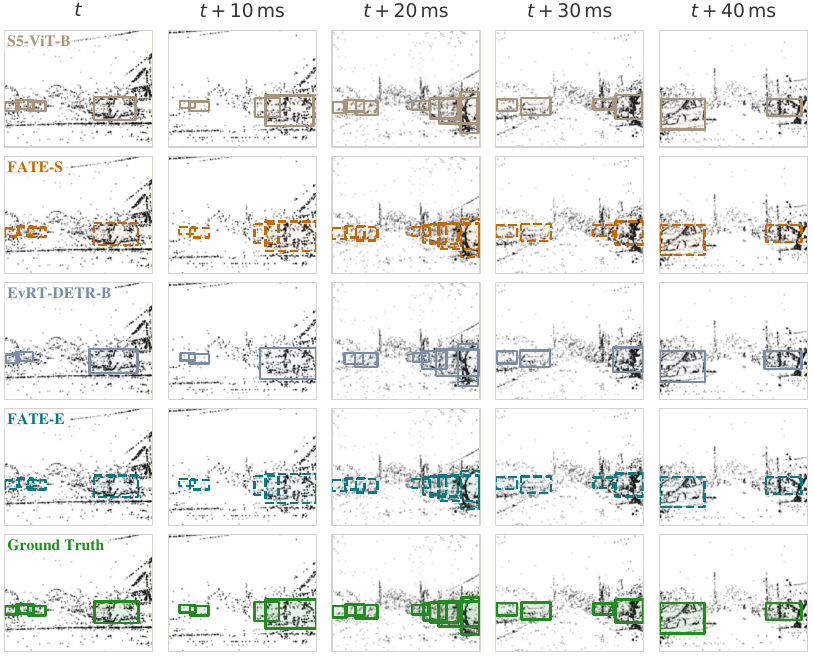}
  \caption{Qualitative comparison of  bounding boxes (Gen1, 20 Hz).}
  %The Gen1 test sequence 17-04-11\_15-13-23\_500000\_60500000 is used to produce the visualization. 
  \label{fig:bbox_comparison}
\end{figure}

\begin{figure}[h]
 \centering
  \includegraphics[width=\textwidth, keepaspectratio]{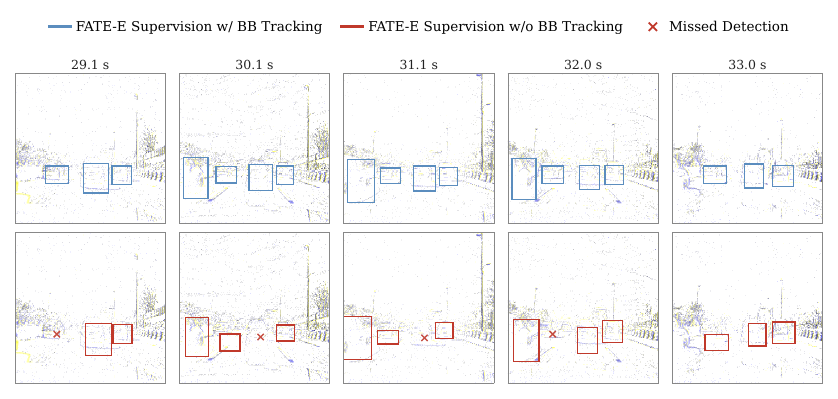}
  \caption{Visualization of the dense label supervision in FATE-E (Gen1, 20 Hz)}
  \label{fig:bbtracking_comparison}
%\end{figure}

\vspace{2cm}

%\begin{figure}[h]
%  \centering
  \includegraphics[width=\textwidth, keepaspectratio]{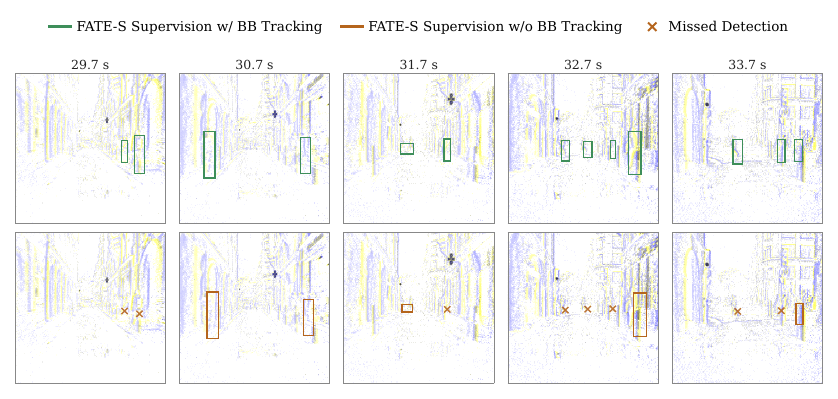}
  \caption{Visualization of the dense label supervision in FATE-S (Gen1, 20 Hz)}
  \label{fig:bbtracking_comparison2}
\end{figure}

\section{Reproduction Details}
\label{app:reproduce}
\input{sections/reproduce}

\section{Limitations}
\label{app:limit}
FATE is developed as a continuous-time encoding framework that integrates with standard synchronous detection backbones. Consequently, it operates on discrete accumulation windows $[t_1, t_2]$ to bridge asynchronous event streams and frame-based architectures. While the proposed time-normalized polynomial basis and FAT curriculum empirically yield strong robustness across a wide range of window durations and operating frequencies, eliminating the windowing mechanism entirely would require adopting fundamentally different downstream architectures (e.g., spiking neural networks or continuous-time state-space models). Such directions are orthogonal to our goal and are not necessary to validate the core contribution of this work, which is to improve temporal representations within existing detection pipelines.

Our formulation further adopts a fixed Legendre polynomial basis with trapezoidal quadrature. This design prioritizes simplicity, efficiency, and stability: low-order polynomials (e.g., $K=3$) provide a compact representation of transient, non-periodic event dynamics, while trapezoidal integration offers a robust approximation for irregular, bursty event streams. Although more expressive alternatives---such as learned implicit representations, adaptive basis functions, or spline-based encodings---could increase representational flexibility, they also introduce additional complexity and confounding factors.

In this work, we intentionally restrict the design space to isolate the effect of a lightweight, structured temporal encoding. Within this scope, FATE demonstrates strong empirical performance and efficiency across frequencies and resolutions. Exploring more adaptive or fully continuous-time formulations represents a promising direction for future work, but lies beyond the targeted scope of this paper.

\section{Broader Impacts}
\label{app:broader_impact}
The advancement of low-latency, event-based perception through the FATE framework has important implications for safety-critical systems. By improving robustness at high operating frequencies (e.g., 200 Hz), FATE can benefit applications that depend on fine-grained temporal resolution, including autonomous driving, aerial robotics, and mobile manipulation. In these settings, accurate high-frequency perception is critical for reliably capturing fast motion and reducing failure cases associated with motion blur or delayed reaction times. Furthermore, event-based vision possesses inherent privacy-preserving attributes; by recording only asynchronous brightness changes (e.g., structural edges and motion) rather than full-texture RGB frames, these sensors naturally obfuscate highly identifiable personal features compared to traditional cameras. In this sense, FATE effectively supports the stable and privacy-conscious deployment of event-based perception in real-world dynamic environments.

At the same time, these capabilities may also introduce dual-use considerations. The enhanced ability to reliably detect and track objects at extremely high frequencies and under challenging illumination conditions could be applied in domains such as pervasive surveillance or military sensing systems, where automated tracking and decision-making raise ethical concerns.

FATE is a general-purpose perception module that operates purely at the representation and detection level, without incorporating application-specific decision-making or control logic. As such, it does not in itself require task- or domain-specific safeguards. Its real-world impact is instead determined by the downstream systems in which it is integrated, where appropriate safety measures, governance, and regulatory oversight should be applied. We encourage continued discussion within the community on the responsible deployment of high-frequency perception systems.

% The advancement of low-latency, event-based perception through the FATE framework carries important societal implications. On the positive side, enabling robust object detection at high frequencies (e.g., 200 Hz) directly benefits safety-critical autonomous systems. Applications such as autonomous driving, unmanned aerial vehicles, and robotics rely on fine-grained temporal information to prevent collisions. By mitigating high-frequency performance degradation in event cameras, FATE brings high-dynamic-range, motion-blur-free perception closer to reliable real-world deployment.

% However, these capabilities also introduce potential ethical concerns and dual-use risks. In particular, the ability to reliably track high-speed objects under challenging lighting conditions could be leveraged for pervasive surveillance or military use, such as automated targeting or drone tracking.

% Moving forward, it is important for the research community to advocate for responsible regulatory frameworks governing high-speed automated surveillance and to promote safeguards that mitigate misuse.

% --- SECTION E ---
\section{Public Resources Used}
\label{app:resources}
In this section, we acknowledge the public resources used during the course of this work.

\subsection{Public Datasets Used}
\begin{itemize}
    \item \textbf{Gen1} \dotfill Prophesee Gen1 Automotive Detection Dataset License
    \item \textbf{1 Mpx} \dotfill Prophesee 1Mpx Automotive Detection Dataset License
\end{itemize}
\subsection{Public Implementations Used}
\begin{itemize}
    \item \textbf{RVT} \dotfill MIT License
    \item \textbf{SSM} \dotfill Apache-2.0 license
    \item \textbf{LEOD} \dotfill MIT License
    \item \textbf{EvRT-DETR} \dotfill BSD-3 License
\end{itemize}
% \subsection{Additional Code and License Attributions}
% This repository includes code from the following projects:

% \begin{itemize}
%     \item \textbf{LeanBase} (\texttt{evlearn/bundled/leanbase/}) \\
%     License: BSD-2 \\
%     Purpose: Primitive PyTorch routines.

%     \item \textbf{RT-DETR (PyTorch)} (\texttt{evlearn/bundled/rtdetr\_pytorch/}) \\
%     Original Project: \url{https://github.com/lyuwenyu/RT-DETR} \\
%     License: Apache-2.0 \\
%     Purpose: Reference RT-DETR implementation.

%     \item \textbf{YOLOX} (\texttt{evlearn/bundled/yolox/}) \\
%     Original Project: \url{https://github.com/Megvii-BaseDetection/YOLOX} \\
%     License: Apache-2.0 \\
%     Purpose: Reference YOLOX implementation.
% \end{itemize}

%% file: sections/justify3.tex
\newtheorem{prop}{Proposition}

In this section, we provide the theoretical foundation for our Pillar Encoding (PE) formulation. To compress the continuous event signal, we project it onto an orthogonal polynomial basis using Legendre polynomials, $L_k(\tau)$, which satisfy the standard orthogonality and normalization condition $\langle L_k, L_m \rangle = \frac{2}{2k+1} \delta_{km}$ over the interval $[-1, 1]$ \cite{abramowitz1964handbook}. Building on this formulation, we establish two key guarantees of our architecture: (i) the truncated Legendre projection yields the unique minimizer of the $L^2$ functional reconstruction error over the polynomial subspace; and (ii) the density-aware numerical quadrature yields an unbiased temporal representation.

\subsection{Optimality of the Truncated Orthogonal Projection}

We model the temporal trajectory of the event features within a pillar as an underlying continuous, square-integrable function $h(\tau) \in \mathcal{H}$, where $\mathcal{H} = L^2([-1,1])$ is a Hilbert space with the standard inner product $\langle f, g \rangle = \int_{-1}^{1} f(\tau)g(\tau)d\tau$. 

\begin{prop}[Optimality of Legendre Projection]
Let $h \in L^2([-1,1])$, and let $\mathcal{S}_K = \mathrm{span}\{L_0, L_1, \dots, L_{K-1}\}$, where $\{L_k\}_{k=0}^{K-1}$ are the Legendre polynomials, orthogonal with respect to the $L^2([-1,1])$ inner product. Define
$$
h_K(\tau) = \sum_{k=0}^{K-1} a_k L_k(\tau), \quad \text{with} \quad a_k = \frac{2k+1}{2} \langle h, L_k \rangle.
$$
Then $h_K$ is the unique orthogonal projection of $h$ onto $\mathcal{S}_K$, and hence $h_K = \arg\min_{g \in \mathcal{S}_K} \| h - g \|_{L^2}^2$.
\end{prop}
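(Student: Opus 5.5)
The plan is to verify directly that the residual $h - h_K$ is orthogonal to $\mathcal{S}_K$, and then to deduce minimality and uniqueness from a Pythagorean identity. This avoids invoking the abstract Hilbert projection theorem beyond its elementary finite-dimensional content, since $\mathcal{S}_K$ is finite-dimensional and hence closed.

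First, $h_K \in \mathcal{S}_K$ by construction. The coefficients $a_k$ are finite, because Cauchy--Schwarz gives $|\langle h, L_k\rangle| \le \|h\|\,\|L_k\| < \infty$.

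Next, for each $m \in \{0,\dots,K-1\}$ we use the orthogonality relation $\langle L_k, L_m\rangle = \frac{2}{2k+1}\delta_{km}$ to compute
\begin{equation*}
\langle h - h_K, L_m\rangle = \langle h, L_m\rangle - \sum_{k=0}^{K-1} a_k \langle L_k, L_m\rangle = \langle h, L_m\rangle - \frac{2m+1}{2}\langle h, L_m\rangle\cdot\frac{2}{2m+1} = 0 .
\end{equation*}
By linearity, $h - h_K \perp g$ for every $g \in \mathcal{S}_K$. Hence $h_K$ is an orthogonal projection of $h$ onto $\mathcal{S}_K$.

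For uniqueness of the projection, suppose $p, p' \in \mathcal{S}_K$ both have residuals orthogonal to $\mathcal{S}_K$. Then $p - p' = (h-p') - (h-p)$ lies in $\mathcal{S}_K$ and is also orthogonal to it. Therefore $\|p-p'\|^2 = 0$, so $p = p'$ in $L^2$.

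For the minimization claim, take any $g \in \mathcal{S}_K$ and write $h - g = (h - h_K) + (h_K - g)$. The second term lies in $\mathcal{S}_K$, so it is orthogonal to the first, and the Pythagorean identity gives
\begin{equation*}
\|h-g\|_{L^2}^2 = \|h-h_K\|_{L^2}^2 + \|h_K-g\|_{L^2}^2 \ge \|h-h_K\|_{L^2}^2 .
\end{equation*}
Equality holds iff $\|h_K - g\| = 0$, i.e.\ $g = h_K$. This identifies $h_K$ as the unique $\arg\min$.

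None of these steps is a serious obstacle. The only point needing care is the sense of uniqueness: it holds as elements of $L^2$, i.e.\ up to almost-everywhere equality. Since $\mathcal{S}_K$ consists of polynomials, and the $L_k$ are linearly independent (being orthogonal and nonzero), uniqueness also holds pointwise and at the level of the coefficient vector. Completeness of the Legendre system is not needed for this finite truncation.
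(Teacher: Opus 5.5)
Your proof is correct and follows essentially the same route as the paper: you show the residual $h-h_K$ is orthogonal to each $L_m$ via the orthogonality relation, then use the Pythagorean decomposition $\|h-g\|^2=\|h-h_K\|^2+\|h_K-g\|^2$ to conclude that $h_K$ is the unique minimizer. Your separate uniqueness-of-projection argument and your remarks on finiteness of the coefficients and on a.e.\ versus pointwise equality are small refinements that the paper leaves implicit.
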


\begin{proof}
Let $g(\tau) \in \mathcal{S}_K$ be an arbitrary polynomial in the subspace. We can analyze the squared $L^2$ error between the true signal $h$ and the approximation $g$:
$$
\| h - g \|^2 = \| (h - h_K) + (h_K - g) \|^2.
$$
Expanding this expression using the properties of the inner product yields:
$$
\| h - g \|^2 = \| h - h_K \|^2 + \| h_K - g \|^2 + 2\langle h - h_K, h_K - g \rangle.
$$
To show that the cross-term vanishes, we must verify that the residual $(h - h_K)$ is orthogonal to every basis function $L_m$ for $m \in \{0, \dots, K-1\}$. Evaluating the inner product:
$$
\langle h - h_K, L_m \rangle = \langle h, L_m \rangle - \sum_{k=0}^{K-1} a_k \langle L_k, L_m \rangle.
$$
By the orthogonality of the Legendre polynomials, $\langle L_k, L_m \rangle = \frac{2}{2m+1}\delta_{km}$, so the sum collapses to a single term:
$$
\langle h - h_K, L_m \rangle = \langle h, L_m \rangle - a_m \left(\frac{2}{2m+1}\right).
$$
Substituting the definition of $a_m = \frac{2m+1}{2} \langle h, L_m \rangle$, we obtain:
$$
\langle h - h_K, L_m \rangle = \langle h, L_m \rangle - \langle h, L_m \rangle = 0.
$$
Since $(h - h_K)$ is orthogonal to every basis vector in $\mathcal{S}_K$, and $(h_K - g)$ is a linear combination of these basis vectors, it follows that $\langle h - h_K, h_K - g \rangle = 0$. We are left with:
$$
\| h - g \|^2 = \| h - h_K \|^2 + \| h_K - g \|^2.
$$
Since $\| h - h_K \|^2$ is fixed for a given $h$ and $K$, minimizing the total error requires minimizing $\| h_K - g \|^2$. By positive-definiteness of the norm, the minimum is achieved if and only if $\| h_K - g \|^2 = 0$, which implies $g = h_K$. 

This proves that PE optimally projects the continuous feature trajectory into the $K$-dimensional polynomial coefficient space without relying on fixed temporal binning. 
\end{proof}

\subsection{Unbiasedness of the Quadrature-Corrected Estimator}

Event streams are asynchronous, meaning the temporal distribution of events is non-uniform. If an object accelerates, or if contrast changes, events arrive in dense bursts.

\begin{prop}[Bias of Naive Aggregation vs. Quadrature]
Let $\{\tau_n\}_{n=1}^N$ be an ordered sequence of timestamps with $\tau_1 < \cdots < \tau_N$, sampled according to an inhomogeneous temporal density $\rho(\tau)$. Define $H_n := h(\tau_n)$, where $h : [-1,1] \to \mathbb{R}$ is the underlying continuous-time signal. Let $\Delta \tau_n := \tfrac{1}{2}(\tau_{n+1} - \tau_{n-1})$ denote the local temporal spacing for interior points, with appropriate one-sided definitions at the boundaries.

The naive empirical mean estimator $\hat{I}_k^{\mathrm{naive}}$ is biased by $\rho(\tau)$, whereas the quadrature estimator $\hat{I}_k^{\mathrm{quad}}$ asymptotically recovers the unscaled projection integral $I_k$.
\end{prop}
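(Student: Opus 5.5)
I will first pin down the two estimators, since the statement leaves them implicit. The naive estimator is
\[
\hat I_k^{\mathrm{naive}} = \frac{1}{N}\sum_{n=1}^N H_n L_k(\tau_n).
\]
The quadrature estimator is $\hat I_k^{\mathrm{quad}} = \sum_{n=1}^N \Delta\tau_n\, H_n L_k(\tau_n)$, with the one-sided boundary weights of Eq.~\ref{eq:trape_weights}. The target is the unscaled integral $I_k = \int_{-1}^{1} h(\tau)L_k(\tau)\,d\tau$.

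I will model the timestamps as $N$ i.i.d.\ draws from a density $\rho$ on $[-1,1]$ with $\rho>0$, then sorted. I assume $h$ is continuous, hence bounded; this is stronger than $L^2$ but needed for pointwise evaluation. Write $\phi_k := hL_k$, which is continuous on $[-1,1]$.

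\textbf{Naive estimator.} This part is routine. Exchangeability of the order statistics lets me drop the sorting, so
\[
\mathbb{E}\bigl[\hat I_k^{\mathrm{naive}}\bigr] = \int_{-1}^{1} \phi_k(\tau)\rho(\tau)\,d\tau.
\]
By the strong law of large numbers the estimator also converges almost surely to this value. It therefore equals $I_k$ for all admissible $h$ only when $\rho \equiv 1/2$, i.e.\ when the density is uniform. When $\rho$ is non-uniform I will show the bias is generically nonzero. Take $h$ such that $\phi_0 = h$ is positively correlated with $\rho - \tfrac12$; then $\int h(\rho-\tfrac12)\,d\tau \neq 0$. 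Normalizing by $\tfrac12$ is immaterial here, and this matches the informal claim that bursts dominate.

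\textbf{Quadrature estimator.} The plan is deterministic given the samples. I rewrite $\hat I_k^{\mathrm{quad}}$ as the composite trapezoidal rule on the nonuniform grid:
\[
\hat I_k^{\mathrm{quad}} = \sum_{n=1}^{N-1} \frac{\tau_{n+1}-\tau_n}{2}\bigl(\phi_k(\tau_n)+\phi_k(\tau_{n+1})\bigr).
\]
Summation by parts gives exactly the weights of Eq.~\ref{eq:trape_weights}, with one caveat noted below. This is a Riemann-type sum over $[\tau_1,\tau_N]$. Its error is bounded by $\omega_{\phi_k}(\delta_N)\cdot 2$, where $\omega$ is the modulus of continuity and $\delta_N = \max_n(\tau_{n+1}-\tau_n)$ is the mesh. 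The two uncovered end pieces, $[-1,\tau_1]$ and $[\tau_N,1]$, contribute at most $\|\phi_k\|_\infty(\tau_1+1+1-\tau_N)$.

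It therefore suffices to show $\delta_N \to 0$, $\tau_1\to-1$ and $\tau_N\to 1$ in probability, or almost surely. This follows from $\rho$ being bounded below by some $\rho_{\min}>0$. Cover $[-1,1]$ by $M$ intervals of length $2/M$. Each interval is empty with probability at most $(1-2\rho_{\min}/M)^N$. A union bound then gives that with probability tending to $1$ every interval is occupied, so $\delta_N\le 4/M$. Letting $M\to\infty$ slowly, and using Borel--Cantelli for almost-sure convergence, gives $\hat I_k^{\mathrm{quad}}\to I_k$, independent of $\rho$. That is the claimed asymptotic recovery.

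\textbf{Expected obstacles.} The main one is precision about what ``unbiased'' means. The quadrature estimator is only asymptotically consistent, not exactly unbiased at finite $N$, so I will state convergence and mention that the bias is $O(\mathbb{E}\,\omega(\delta_N))$. A second subtlety is the boundary weights. As written, the endpoint weights in Eq.~\ref{eq:trape_weights} are $\tfrac12(\tau_2-\tau_1)$, which is consistent with trapezoid. Conversely, the normalized version $z_{c,j,k}$ in Eq.~\ref{eq:moment} divides by $W_j=\tfrac12(\tau_N-\tau_1)\cdot 2/2$. A direct check shows $W_j=\tau_N-\tau_1\to 2$, so $z$ converges to $I_k/2$, a fixed rescaling that is absorbed by $\alpha$. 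I will note this as a corollary.
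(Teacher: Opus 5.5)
Your proposal is correct and follows the paper's route: a law-of-large-numbers limit $\int h L_k\,\rho\,d\tau$ for the naive mean, and Riemann/trapezoid-sum convergence for the quadrature estimator. Your added details fill in steps the paper simply asserts, namely the assumption $\rho\ge\rho_{\min}>0$ that forces the mesh to vanish, control of the uncovered end intervals $[-1,\tau_1]$ and $[\tau_N,1]$, and $W_j=\tau_N-\tau_1\to 2$. One small fix: use the paper's normalization $\hat I_k^{\mathrm{naive}}=\frac{2}{N}\sum_{n} H_n L_k(\tau_n)$, because with your $\frac{1}{N}$ the uniform case $\rho\equiv\tfrac12$ gives $\tfrac12 I_k$ rather than $I_k$, which contradicts your own claim that the uniform case is unbiased.
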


\begin{proof}
The unscaled $k$-th Legendre projection integral is defined as:
$$
I_k = \int_{-1}^{1} h(\tau) L_k(\tau)\, d\tau.
$$

The naive Monte Carlo estimator aggregates features uniformly across events:
$$
\hat{I}_k^{\mathrm{naive}} = \frac{2}{N} \sum_{n=1}^N H_n L_k(\tau_n).
$$

As $N \to \infty$, by the Law of Large Numbers, this estimator converges to the expectation under $\rho(\tau)$:
$$
\lim_{N \to \infty} \hat{I}_k^{\mathrm{naive}} \propto \int_{-1}^{1} h(\tau) L_k(\tau)\, \rho(\tau)\, d\tau \neq I_k.
$$

Thus, $\hat{I}_k^{\mathrm{naive}}$ biases the representation toward regions with high event density ($\rho(\tau)$), distorting the true temporal trajectory $h(\tau)$.

Conversely, the trapezoidal quadrature estimator uses temporal weights $w_n \approx \Delta \tau_n$ \cite{press2007numerical, davis2007methods}:
$$
\hat{I}_k^{\mathrm{quad}} = \sum_{n=1}^N w_n H_n L_k(\tau_n).
$$

This corresponds to a Riemann sum over a non-uniform partition. As the maximum spacing vanishes ($\max_n \Delta \tau_n \to 0$), the sum converges to the integral \cite{rudin1976principles}:
$$
\lim_{\max_n \Delta \tau_n \to 0} \sum_{n=1}^N w_n H_n L_k(\tau_n)
= \int_{-1}^{1} h(\tau) L_k(\tau)\, d\tau = I_k.
$$

Therefore, the quadrature estimator yields an asymptotically unbiased estimate of the unscaled continuous-time projection integral. 

In our final PE formulation (Section \ref{sec:pillar_encoding}), the network computes the feature $z_{c,j,k} = \frac{1}{W_j} \hat{I}_k^{\mathrm{quad}}$. The pillar-specific normalization term $W_j$ converts the raw integral into a duration-normalized average. Rather than claiming strict $L^2$-optimality at this final output stage, we note that this formulation ensures the model learns a density-invariant, duration-normalized temporal feature, stabilizing the inputs for subsequent network layers.
\end{proof}

%% file: tables/7.tex
\begin{table}[t]
\centering
\caption{Quality of generated pseudo labels on Gen1 for cars and pedestrians. We sample 10k boxes per frequency and evaluate against ground truth nearest in time.}
\label{tab:pseudo_quality_class}
\setlength{\tabcolsep}{4pt}
\scriptsize
\begin{tabular}{llcccccccc}
\toprule
\multirow{2}{*}{Model} & \multirow{2}{*}{Freq.}
& \multicolumn{4}{c}{Cars}
& \multicolumn{4}{c}{Pedestrians} \\
\cmidrule(lr){3-6} \cmidrule(lr){7-10}
& 
& Precision$\uparrow$ & Recall$\uparrow$ & Med IoU$\uparrow$ & IoU $>0.5$ $\uparrow$
& Precision$\uparrow$ & Recall$\uparrow$ & Med IoU$\uparrow$ & IoU $>0.5$ $\uparrow$ \\
\midrule
\multirow{5}{*}{PE + S5-ViT-B}
& 20 Hz  & 0.78 & \underline{0.64} & \underline{0.62} & 0.78 & 0.68 & \underline{0.52} & \underline{0.54} & 0.68 \\
& 40 Hz  & 0.75 & 0.60 & 0.59 & 0.74 & 0.65 & 0.48 & 0.51 & 0.64 \\
& 80 Hz  & 0.70 & 0.54 & 0.55 & 0.69 & 0.60 & 0.43 & 0.47 & 0.60 \\
& 100 Hz & 0.66 & 0.50 & 0.52 & 0.66 & 0.57 & 0.40 & 0.44 & 0.56 \\
& 200 Hz & 0.60 & 0.42 & 0.47 & 0.45 & 0.49 & 0.32 & 0.40 & 0.49 \\
\midrule
\multirow{5}{*}{PE + EvRT-DETR-B}
& 20 Hz  & \textbf{0.83} & \textbf{0.68} & \textbf{0.65} & \textbf{0.83} & \textbf{0.72} & \textbf{0.56} & \textbf{0.57} & \textbf{0.72} \\
& 40 Hz  & \underline{0.80} & \underline{0.64} & \underline{0.62} & \underline{0.80} & \underline{0.69} & \underline{0.52} & \underline{0.54} & \underline{0.69} \\
& 80 Hz  & 0.75 & 0.58 & 0.58 & 0.75 & 0.64 & 0.47 & 0.50 & 0.48 \\
& 100 Hz & 0.72 & 0.54 & 0.55 & 0.72 & 0.61 & 0.44 & 0.47 & 0.44 \\
& 200 Hz & 0.65 & 0.46 & 0.48 & 0.47 & 0.53 & 0.36 & 0.42 & 0.38 \\
\bottomrule
\end{tabular}
\end{table}

\begin{table}[h]
\centering
\caption{FAT vs. ablated strategies in FATE-E on Gen1.}
\label{tab:fat_component_ablation}
\setlength{\tabcolsep}{4pt}
\scriptsize
\begin{tabular}{lcccccccc}
\toprule
\multirow{2}{*}{Method} 
& \multirow{2}{*}{Ground Truth}
& \multirow{2}{*}{Dense Sup.} 
& \multirow{2}{*}{Pseudo-labels} 
& \multicolumn{5}{c}{mAP (\%) $\uparrow$} \\ 
\cmidrule(lr){5-9}
&  &  &  
& 20Hz & 40Hz & 80Hz & 100Hz & 200Hz \\
\midrule
PE + EvRT-DETR-B
& \cmark & \xmark & \xmark
& 51.94 & 50.31 & 46.06 & 44.10 & 36.40  \\

Multi-frequency  
& \cmark & \xmark & \xmark
& 49.24 & 47.06 & 45.70 & 43.12 & 40.48 \\

Consistency-only 
& \cmark & \xmark & \cmark
& 48.90 & 47.85 & 45.40 & 43.80 & 41.10  \\

Naive interpolation 
& \cmark & \cmark & \xmark 
& \underline{51.18} & \underline{49.37} & \underline{47.21} & \underline{45.41} & \underline{41.64} \\

FAT (ours) 
& \cmark & \cmark & \cmark
& \textbf{53.14} & \textbf{51.60} & \textbf{48.40} & \textbf{46.80} & \textbf{42.38} \\
\bottomrule
\end{tabular}
\end{table}

%% file: tables/4.tex
\paragraph{Impact of temporal polynomial degree and pooling mechanisms.} We conduct a comprehensive sensitivity analysis of the temporal polynomial degree ($K$) and pooling strategies on the Gen1 benchmark. 

In Table~\ref{tab:epe_temporal_sensitivity}, performance improves as $K$ increases from 2 to 3, but degrades for $K = 4$ and $5$ despite the added complexity. Accordingly, we set $K=3$, which achieves the best trade-off between performance and complexity. Moreover, we adopt a temporal basis with trapezoidal weighting, which consistently outperforms the baseline strategies.

Since these hyperparameters primarily control temporal feature compression rather than spatial resolution, the optimal configuration identified in Table~\ref{tab:epe_temporal_sensitivity} ($K=3$ with trapezoidal weighting) generalizes effectively to higher-resolution settings, as reflected in the 1Mpx results reported in the main text.

\begin{table*}[h]
\centering
\caption{Sensitivity of temporal polynomial degree $K$ and temporal pooling methods in PE across operating frequencies on Gen1 test set.}
\label{tab:epe_temporal_sensitivity}
\setlength{\tabcolsep}{4pt}
\scriptsize
\begin{tabular}{lccc ccc ccc}
\toprule
\multirow{2}{*}{Variant}
& \multicolumn{3}{c}{20\,Hz}
& \multicolumn{3}{c}{100\,Hz}
& \multicolumn{3}{c}{200\,Hz} \\
\cmidrule(lr){2-4} \cmidrule(lr){5-7} \cmidrule(lr){8-10}
& mAP & AP$_{50}$ & AP$_{75}$
& mAP & AP$_{50}$ & AP$_{75}$
& mAP & AP$_{50}$ & AP$_{75}$ \\
\midrule
$K=2$ & 46.32 & 76.00 & 49.80 & 38.20 & 65.50 & 40.30 & 29.90 & 54.80 & 30.20 \\
$K=3$ & \textbf{51.94} & \textbf{81.20} & \textbf{55.10} & \textbf{44.10} & \textbf{71.90} & \textbf{46.20} & \textbf{36.40} & \textbf{61.80} & \textbf{36.80} \\
$K=4$ & \underline{50.71} & \underline{79.95} & \underline{53.90} & 42.82 & 70.40 & \underline{44.80} & \underline{34.88} & \underline{59.90} & \underline{35.10}  \\
$K=5$ & 49.80 & 78.90 & 52.70 & \underline{43.50} & \underline{70.50} & 44.22 & 33.60 & 58.20 & 33.80 \\ 
\midrule
Max Pooling & 46.10 & 72.80 & 50.50 & 38.90 & 66.10 & 42.00 & 29.40 & 54.20 & 29.80 \\
Temporal Basis + Uniform Weights & \underline{51.63} & \underline{80.90} & \underline{54.80} & \underline{43.70} & \underline{71.20} & \underline{45.60} & \underline{35.60} & \underline{60.50} & \underline{35.70} \\
Temporal Basis + Trapezoidal Weights & \textbf{51.94} & \textbf{81.20} & \textbf{55.10} & \textbf{44.10} & \textbf{71.90} & \textbf{46.20} & \textbf{36.40} & \textbf{61.80} & \textbf{36.80} \\
\bottomrule
\end{tabular}
\end{table*}

% ---------- Previous Tables ----------

\begin{comment}
\begin{table}[t]
\centering
\caption{Ablation study on \color{red}{time to detection} (Gen1 test).}
\label{tab:ablation_latency}
\setlength{\tabcolsep}{3.5pt}
\scriptsize
\begin{tabular}{l ccc cc}
\toprule
\multirow{2}{*}{Setting} 
& \multicolumn{3}{c}{\textbf{TTD (ms)} $\downarrow$} 
& \multicolumn{2}{c}{\textbf{Reliability}} \\
\cmidrule(lr){2-4} \cmidrule(lr){5-6}
& Median & P90 & Mean 
& Miss@200ms (\%) $\downarrow$ 
& $P(\le100$ms$)$ (\%) $\uparrow$ \\
\midrule
RVT-B & -- & -- & -- & -- & -- \\
EPE (+RVT-B) & -- & -- & -- & -- & -- \\
EPE (+RVT-B) + FALR & -- & -- & -- & -- & -- \\
\bottomrule
\end{tabular}
\end{table}  
\end{comment}

% \begin{table}[t]
% \centering
% \caption{Sensitivity of the temporal pooling choices in EPE.}
% \label{tab:epe_temporal_sensitivity}
% \setlength{\tabcolsep}{6pt}
% \scriptsize
% \begin{tabular}{lcccc}
% \toprule
% Variant & mAP & AP$_{50}$ & AP$_{75}$ & Params (M) \\
% \midrule
% $K=2$ & -- & -- & -- & -- \\
% $K=3$ & -- & -- & -- & -- \\
% $K=4$ & -- & -- & -- & -- \\
% %$K=5$ & -- & -- & -- & -- \\
% \midrule
% %Mean pooling & -- & -- & -- & -- \\
% Max Pooling & -- & -- & -- & -- \\
% Temporal Basis + Uniform Weights & -- & -- & -- & -- \\
% Temporal Basis + Trapezoidal Weights & -- & -- & -- & -- \\
% \bottomrule
% \end{tabular}
% \end{table}

%% file: tables/5.tex
\paragraph{Impact of the pillar size.}
We perform a spatial pillar size ablation at an intermediate operating frequency of 100 Hz. Since spatial quantization primarily affects the morphological resolution of targets rather than their temporal dynamics, the optimal spatial scale remains largely invariant to the choice of temporal frequency.

Table~\ref{tab:ablation_epe} highlights a key architectural trade-off between spatial resolution and global context preservation. As the pillar size decreases from 8 to 2, performance improves steadily, reaching a peak mAP of 46.80\%, indicating that finer spatial discretization more effectively captures the fine-grained structure of moving targets. 

However, further reducing the pillar size to 1 leads to a performance drop. This inflection arises from the fixed resolution budget of $P=16,000$ pillars per sample; at a pillar size of 1, the grid expands to 81,920 possible pillars, requiring the model to discard over 80\% of spatial locations. In practice, a pillar size of 2 provides the best trade-off---preserving spatial fidelity while avoiding severe information loss due to pillar truncation.

%\hl{For Gen1, event statistics indicate that $N = 32$ is sufficient for the majority of the active pillars, particularly at high temporal resolution (5-10) ms, whereas 99th percentile event count using pillar size of 2 remains below 32. The number of events exceeds N, in small tail ( around 1\%) of active pillars at larger event accumulation window (e.g. 50ms). Therefore, $ N = 32$ captures the majority of local temporal activity, and avoids allocating large buffers for inherently sparse pillars.}

On Gen1, event statistics show that $N=32$ (samples per pillar) is sufficient for over 99\% of active pillars (using a spatial size of $2 \times 2$). Only a small $~\sim$1\% tail exceeds this threshold, primarily during longer 50 ms accumulation windows. At higher temporal resolutions (5--10 ms), event counts fall well below 32. Thus, setting $N=32$ safely captures local temporal dynamics while avoiding excessive memory waste in sparse pillars.

\begin{table}[h]
\centering
\caption{Sensitivity of pillar size choices in PE on the Gen1 test set at 100\,Hz.}x
\label{tab:ablation_epe}
\setlength{\tabcolsep}{4pt}
\small
{Fixed: $P=16{,}000,\; C=64,\; K=3,\; N=32$}\\[3pt]
\begin{tabular}{@{}ccccccc@{}}
\toprule
Pillar size & Grid $(H\times W)$ & Total pillars & Max pillars $P$ & mAP (\%) $\uparrow$ & AP$_{50}$ (\%) $\uparrow$ & AP$_{75}$ (\%) $\uparrow$ \\
\midrule
8 & $32\times40$   & 1,280  & 1,280  & 43.18 & 69.20 & 45.10 \\
4 & $64\times80$   & 5,120  & 5,120  & 45.73 & \underline{72.60} & 47.80 \\
\textbf{2} & $\mathbf{128\times160}$ & \textbf{20,480} & \textbf{16,000} & \textbf{46.80} & \textbf{73.10} & \textbf{49.10} \\
1 & $256\times320$ & 81,920 & 16,000 & \underline{46.09} & 71.90 & \underline{48.20} \\
\bottomrule
\end{tabular}
\end{table}

%% file: tables/8.tex
\begin{table*}[h]
\centering
\caption{Impact of temporal capacity $C$ on the Gen1 test set.}
\label{tab:fe_channel_ablation}
\setlength{\tabcolsep}{4pt}
\small
{Fixed: Pillar size $=$ 2, $P=16,000, K=3, N=32$}\\[3pt]
\begin{tabular}{lccc ccc ccc c}
\toprule
\multirow{2}{*}{Channel dimension $C$}
& \multicolumn{3}{c}{20\,Hz}
& \multicolumn{3}{c}{100\,Hz}
& \multicolumn{3}{c}{200\,Hz}
& \multirow{2}{*}{\#Params (M)} \\
\cmidrule(lr){2-4} \cmidrule(lr){5-7} \cmidrule(lr){8-10}
& mAP & AP$_{50}$ & AP$_{75}$
& mAP & AP$_{50}$ & AP$_{75}$
& mAP & AP$_{50}$ & AP$_{75}$ \\
\midrule
$C=32$
& 52.48 & 81.70 & 56.10
& 45.71 & 71.90 & 47.70
& 40.64 & 65.90 & 40.80
& 57.151 \\

$C=48$
& 52.86 & 82.00 & 56.50
& 46.22 & 72.50 & 48.20
& 41.72 & 67.10 & 41.70
& 57.156 \\

$C=64$
& \textbf{53.14} & \textbf{82.30} & \textbf{56.80}
& \textbf{46.80} & \textbf{73.10} & \textbf{49.10}
& \textbf{42.38} & \textbf{68.20} & \textbf{42.50}
& 57.161 \\

$C=80$
& \underline{53.08} & \underline{82.20} & \underline{56.70}
& \underline{46.69} & \underline{72.90} & \underline{48.70}
& \underline{42.21} & \underline{67.90} & \underline{42.20}
& 57.166 \\

$C=96$
& 52.97 & 82.00 & 56.40
& 46.41 & 72.60 & 48.30
& 41.95 & 67.40 & 41.90
& 57.171 \\
\bottomrule
\end{tabular}
\end{table*}

\begin{table}[h]
\centering
\caption{Efficiency ablations of FATE-E and FATE-S on the Gen1 test set at 20, 100, and 200\,Hz.}
\label{tab:efficiency}
\setlength{\tabcolsep}{4pt}
\scriptsize
\begin{tabular}{l c cc cc cc}
\toprule
\multirow{2}{*}{Model} & \multirow{2}{*}{Params (M)$\downarrow$}
& \multicolumn{2}{c}{20\,Hz} 
& \multicolumn{2}{c}{100\,Hz}
& \multicolumn{2}{c}{200\,Hz} \\
\cmidrule(lr){3-4} \cmidrule(lr){5-6} \cmidrule(lr){7-8}
& & mAP$\uparrow$ & Latency (ms)$\downarrow$
  & mAP$\uparrow$ & Latency (ms)$\downarrow$
  & mAP$\uparrow$ & Latency (ms)$\downarrow$ \\
\midrule
EvRT-DETR-B      & \textbf{57.14} & 52.70 & \textbf{11.4} & 42.20 & \textbf{11.4} &  32.00 & \textbf{11.4} \\
+ PE             & 57.16 & 51.94 & 11.5 &  44.10   & 11.5 &  36.40  & 11.5 \\
+ PE + FAT (FATE-E)      & 57.16 & \textbf{53.14} & 11.5 & \textbf{46.80} & 11.5 & \textbf{42.38} & 11.5 \\
\midrule
S5-ViT-B         & \textbf{18.19} & 47.40 & \textbf{8.5} & 42.49 & \textbf{8.5} & 39.84 & \textbf{8.5} \\
+ PE             & 18.33 & 47.70 & 8.6 & 43.08 & 8.6 & 39.30 & 8.6 \\
+ PE + FAT (FATE-S)      & 18.33 & \textbf{48.20} & 8.6 & \textbf{43.18} & 8.6 & \textbf{40.80} & 8.6 \\
\bottomrule
\end{tabular}
\end{table}

%% file: sections/reproduce.tex
This section provides implementation details, hyperparameter settings, and dataset preprocessing procedures that were omitted from the main text for brevity. An anonymized ZIP archive is included with the submission, and the code along with the high-frequency event-based datasets will be released publicly upon acceptance.

%\subsection{Algorithmic Implementations}
%\label{sec:algorithms}
%\input{sections/algorithms}

\subsection{Datasets and Preprocessing}
We evaluate our method and the baselines on two standard event-based object detection benchmarks, Gen1~\cite{tournemireLargeScaleEventbased2020} and 1Mpx~\cite{perotLearningDetectObjects}, which differ substantially in resolution and annotation density. In particular, Gen1 provides sparse annotations (1–4 Hz), requiring stronger temporal aggregation or densification strategies, whereas 1Mpx offers high frame-rate, dense annotations ($\sim$60 Hz), enabling finer-grained temporal supervision.

To ensure consistency with prior work and the official EvRT-DETR~\cite{torbunovEvRTDETRLatentSpace} and S5-ViT-B \cite{zubicStateSpaceModels2024} implementations, we preserve the original input resolution settings after preprocessing. Specifically, Gen1 samples are padded to $256\times320$, while 1Mpx samples are resized to $360\times640$ and subsequently padded to $384\times640$.

To assess robustness across different temporal resolutions, we vary the event accumulation window to simulate multiple sensing frequencies, including 50 ms (20 Hz), 25 ms (40 Hz), 12.5 ms (80 Hz), 10 ms (100 Hz), and 5 ms (200 Hz). Higher temporal frequencies correspond to shorter accumulation windows, resulting in sparser event representations and reduced temporal context per input sample.

\subsection{Pillar Encoding Configuration}

We tune the architectural hyperparameters for Pillar Encoding---specifically the latent channel dimension $C$, the Legendre basis size $K$, and the spatial pillar size---through sensitivity ablations on the Gen1 dataset. For pillar size 2, we use $D = 7$ discarding the center offsets of events. Based on the results detailed in Tables~\ref{tab:epe_temporal_sensitivity}, \ref{tab:ablation_epe}, and \ref{tab:fe_channel_ablation} (Appendix~\ref{appendix:result_details}), we adopt $C=64$, $K=3$, and a pillar size of 2 for both FATE-E and FATE-S, as this configuration consistently achieves the strongest detection performance across all evaluated operating frequencies.

%yields the best detection performance across all evaluated operating frequencies.

\subsection{Data Augmentation}
To improve the performance and generalization of the proposed model while mitigating overfitting, we employ a comprehensive set of data augmentation techniques. These transformations encourage the model to learn features that are robust to variations in orientation, scale, and partial occlusion. The augmentation pipeline used during training is as follows:

\begin{itemize}
\item \textit{Random Horizontal Flip:} Applied with probability $0.5$ to account for mirror symmetry in natural scenes.

\item \textit{Random Rotation:} Input event streams are rotated by an angle $\theta \in [-30^\circ, 30^\circ]$, improving robustness to non-upright sensor orientations.

\item \textit{Random Affine Translation:} Spatial shifts of up to $0.5$ (relative to image dimensions) are applied along both $x$ and $y$ axes to increase robustness to object displacement and centering variability.

\item \textit{Scale Jitter:} Inputs are randomly scaled by a factor $s \in [0.5, 1.5]$, simulating variations in object distance and promoting multi-scale feature learning.

\item \textit{Shear:} A shearing transformation up to $30^\circ$ is applied to model perspective distortions and oblique viewpoints.

\item \textit{Center Crop:} Used to maintain consistent input dimensions while preserving central spatial structure.

\item \textit{Random Erasing:} Applied with probability $0.4$, where a randomly selected rectangular region is replaced with noise or zero values, simulating occlusions and encouraging reliance on partial and local features.
\end{itemize}

\subsection{Multi-Frequency Pseudo-Label Generation}
To densify supervision during Phase 1 of our Frequency-Aware Training, we apply the event detector (augmented by PE) sequentially over the full training split using the canonical event window of 50 ms. Recurrent states are preserved across clips of 21 frames for Gen1 and 10 frames for 1Mpx to maintain temporal continuity. Ground-truth bounding boxes are retained strictly at annotated timestamps, while pseudo-labels are generated only for unlabeled timestamps corresponding to the target temporal frequencies.

To ensure high-quality pseudo-supervision, raw detections are first filtered prior to temporal association. We apply an initial IoU threshold of 0.3 for tacking, followed by class-specific detection confidence thresholds of 0.6 for cars and 0.3 for pedestrians, applied to both objectness and classification scores. Subsequently, standard dataset-specific bounding box filtering is performed.

%\textcolor{red}{Which detector was used to generate bounding boxes?}

Finally, a tracking-by-detection post-processing step is used to enforce temporal consistency. To suppress short, noisy, or spurious trajectories produced by the event detector, we impose a minimum tracklet length $L_{\min}$. To account for increasing temporal resolution at higher sampling frequencies, this constraint is scaled proportionally, with $L_{\min}=6$ at 20 Hz, $L_{\min}=12$ at 40 Hz, and further increased accordingly for 80 Hz, 100 Hz, and 200 Hz.

\subsection{Linear Curriculum Schedule}
We employ a curriculum over the frequency-specific datasets. At training round $r$, the student frequency is sampled according to $f \sim p_r(f)$, which gradually shifts probability mass toward higher, sparser frequencies.

To ensure stable training from dense to sparse regimes, we adopt a linear schedule over the sampling distribution. Let $\mathcal{F} = \{f_{\min}, \dots, f_{\max}\}$ denote the ordered set of frequencies (e.g., $20, 40, 80, 100, 200$\,Hz), and let $r \in [0, R]$ denote the current training epoch. We define the normalized progress $\alpha_r = r / R \in [0,1]$ to bias sampling toward higher frequencies as $r$ increases.

The sampling distribution is given by:
$$
p_r(f_i) = \frac{(1 - \alpha_r)\,\mathbf{1}_{f_i = f_{\min}} + \alpha_r \, w_i}{Z_r},
$$
where $w_i = \frac{i}{|\mathcal{F}|}$ and $Z_r$ is a normalization constant ensuring $\sum_{f_i \in \mathcal{F}} p_r(f_i)=1$.

This schedule starts with sampling concentrated at the canonical frequency $f_{\min}$ and gradually shifts toward higher frequencies as $\alpha_r$ increases. Frequencies are sampled independently per training sample within a mini-batch, while the teacher network consistently operates at the canonical frequency $f_c$. We use this fixed linear curriculum for all experiments without additional hyperparameter tuning.

\subsection{Tuning Hyperparameters}
Models are trained using the AdamW optimizer with a weight decay of $1 \times 10^{-4}$ and gradient clipping with a maximum norm of $5.0$.

For the Gen1 variants, models are trained for $400$ epochs with a batch size of $32$. The temporal detectors are optimized using a OneCycle learning rate schedule \cite{smith2019super} with a peak learning rate of $2\times10^{-4}$ and a warm-up proportion of $0.005$. Concurrently, the teacher network is updated via an exponential moving average (EMA) using a momentum decay of $\gamma = 0.9999$.
For the 1Mpx variants, we maintain identical optimizer and scheduler configurations. Across all experimental settings, the transformer head utilizes 300 object queries and 100 denoising queries to facilitate robust bipartite matching and bounding box regression.

%For the 1Mpx variants, we use the same optimizer and scheduler configurations. Across all settings, the transformer head employs $300$ object queries and $100$ denoising queries to support robust bipartite matching and bounding box regression.

The event-wise MLP in Section~\ref{sec:pillar_encoding} is trained jointly with the detector in an end-to-end manner. Specifically, each $D$-dimensional augmented event vector is projected via a shared linear layer followed by BatchNorm and ReLU into a $C$-dimensional latent embedding, and all MLP parameters are optimized together with the backbone and detection head using the same training configuration as the base detector. In the temporal encoding stage, the coefficients $\alpha_{c,k}$ and $\beta_c$ (Eq.~\ref{eq:represent}) are not manually tuned hyperparameters; instead, they are learned end-to-end as trainable parameters that linearly combine the $K$ quadrature-corrected temporal moments within each channel.

% \textcolor{red}{How did you tune $\lambda_{\mathrm{iou}}$ (Eq.~\ref{eq:box-loss}) and $p_v$ (Eq.~\ref{eq:w_v})?}

We set the box regression weights of $\lambda_{\ell_1}$ to 5 and $\lambda_{iou}$ to 2 (Eq.~\ref{eq:box-loss}), commonly used for the bounding-box and GIoU terms.
For the S5-ViT-B variant, we retain the YOLOX detection objective, where the box term is an IoU regression loss weighted by 5, together with objectness and classification losses. These values are standard for each category of detectors and needed no hyperparameter tuning. 

% For Phase 2 training, pseudo-label supervision is weighted per instance by $p_v$, the teacher confidence score, with ground-truth assigned weight 1.0.

All experiments are conducted on two NVIDIA A40 GPUs (48 GB GDDR6 each) for training and an NVIDIA T4 GPU (16 GB GDDR6) for evaluation. Training requires approximately two weeks per run, while evaluation takes about one week per method–dataset pair.

%\textcolor{red}{/* Describe how you train the MLP (AdamW?) and tune the hyperparameters such as $\alpha$, $\beta$, $C$, and $K$. Also, describe what values of $\alpha$, $\beta$, $C$, and $K$ are used. Finally, briefly say "We used NVIDIA GPUs XYZ for training and evaluation. */}